\def\mcl#1{\mathcal{#1}}
\def\bracket#1{\left\langle #1\right\rangle}
\def\nn{\nonumber}
\def\opn{\operatorname}
\def\mr{\mathrm}
\def\alg{\mcl{A}}
\def\modu{\mcl{M}}
\def\bbracket#1{\big\langle #1\big\rangle}
\def\Bbracket#1{\bigg\langle #1\bigg\rangle}
\def\Bsemibracket#1{\bigg( #1\bigg)}
\def\bx{\mathbf{x}}
\def\fsp{\mcl{F}}
\def\algr{\mathbb{R}^{d\times d}}
\def\red#1{\textcolor{black}{#1}}
\DeclareSymbolFont{EulerExtension}{U}{euex}{m}{n}
\DeclareMathSymbol{\euintop}{\mathop} {EulerExtension}{"52}
\DeclareMathSymbol{\euointop}{\mathop} {EulerExtension}{"48}
\newtheorem{proposition}{Proposition}[section]
\newtheorem{lemma}[proposition]{Lemma}
\newtheorem{theorem}[proposition]{Theorem}
\newtheorem{definition}[proposition]{Definition}
\newtheorem{remark}[proposition]{Remark}
\newtheorem{corollary}[proposition]{Corollary}
\newtheorem{example}[proposition]{Example}
\newenvironment{proof}[1][Proof\ ]{\medskip\noindent{\bf #1}\ }{%
\hfill $\Box$\par\quad\par}
\newenvironment{mythm}[1][]{\medskip\par\noindent{\bfseries #1}\ \,\,\em}{\medskip\par}
\title{Deep Learning with Kernels through RKHM and the Perron--Frobenius Operator}
\author{%
Yuka Hashimoto \\
  NTT Network Service Systems Laboratories / \\
  RIKEN AIP, \\
  Tokyo, Japan \\
  \texttt{yuka.hashimoto@ntt.com} \\
  \And
   Masahiro Ikeda \\
  RIKEN AIP / Keio University, \\
  Tokyo, Japan\\
   \texttt{masahiro.ikeda@riken.jp} \\
   \AND
  Hachem Kadri \\
  Aix-Marseille University, CNRS, LIS, \\
  Marseille, France\\
  \texttt{hachem.kadri@lis-lab.fr} \\
}
\begin{document}

\maketitle

\begin{abstract}
Reproducing kernel Hilbert $C^*$-module (RKHM) is a generalization of reproducing kernel Hilbert space (RKHS) by means of $C^*$-algebra, and the Perron--Frobenius operator is a linear operator related to the composition of functions. Combining these two concepts, we present deep RKHM, a deep learning framework for kernel methods. We derive a new Rademacher generalization bound in this setting and provide a theoretical interpretation of benign overfitting by means of Perron--Frobenius operators. By virtue of $C^*$-algebra, the dependency of the bound on output dimension is milder than existing bounds. We show that $C^*$-algebra is a suitable tool for deep learning with kernels, enabling us to take advantage of the product structure of operators and to provide a clear connection with convolutional neural networks. Our theoretical analysis provides a new lens through which one can design and analyze deep kernel methods.
\end{abstract}

\section{Introduction}
Kernel methods and deep neural networks are two major topics in machine learning.
Originally, they had been investigated independently.
However, their interactions have been researched recently.
One important perspective is deep kernel learning~\cite{cho09,ober21,gholami16}.
In this framework, we construct a function with the composition of functions in RKHSs, which is learned by given training data.
Representer theorems were shown for deep kernel methods, which guarantee the representation of solutions of minimization problems only with given training data~\cite{bohn19,laforgue19}.
We can combine the flexibility of deep neural networks with the representation power \red{and solid theoretical understanding} of kernel methods.
Other important perspectives are neural tangent kernel~\cite{jacot18,chen21} and convolutional kernel~\cite{marial14}, which enable us to understand neural networks using the theory of kernel methods.
In addition, Bietti et al.~\cite{mairal19} proposed a regularization of deep neural network through a kernel perspective.

The generalization properties of kernel methods and deep neural networks have been investigated.
One typical technique for bounding generalization errors is to use the Rademacher complexity~\cite{bartlett02,mohri18}.
For kernel methods, generalization bounds based on the Rademacher complexity can be derived by the reproducing property.
Bounds for deep kernel methods and vector-valued RKHSs (vvRKHSs) were also derived~\cite{laforgue19,sindwani13,huusari21}.
Table~\ref{tab:existing_bounds} shows existing Rademacher generalization bounds for kernel methods.
Generalization bounds for deep neural networks have also been actively studied~\cite{neyshabur15,golowich18,bartlett17,ju22,hashimoto23,Suzuki20}.
Recently, analyzing the generalization property using the concept of benign overfitting has emerged~\cite{belkin19,bartlett20}.
Unlike the classical interpretation of overfitting, which is called catastrophic overfitting, it explains the phenomenon that the model fits both training and test data.
For kernel regression, it has been shown that the type of overfitting can be 
 described by an integral operator associated with the kernel~\cite{mallinar22}.

In this paper, we propose deep RKHM \red{to make the deep kernel methods more powerful}.
RKHM is a generalization of RKHS by means of $C^*$-algebra~\cite{heo08,itoh90,moslehian22}, where $C^*$-algebra is a generalization of the space of complex numbers, regarded as space of operators.
We focus on the $C^*$-algebra of matrices in this paper.
Applications of RKHMs to kernel methods have been investigated recently~\cite{hashimoto21,hashimoto23_aistats}.
We generalize the concept of deep kernel learning to RKHM, which constructs a map from an operator to an operator as the composition of functions in RKHMs.
The product structure of operators induces interactions among elements of matrices, which enables us to capture relations between data components.
Then, we derive a generalization bound of the proposed deep RKHM.
By virtue of $C^*$-algebras, we can use the operator norm, which alleviates the dependency of the generalization bound on the output dimension.

We also use Perron--Frobenius operators,
which are linear operators describing the composition of functions and have been applied to analyzing dynamical systems~\cite{kawahara16,ishikawa18,giannakis20,hashimoto20}, to derive the bound.
The compositions in the deep RKHM are effectively analyzed by the Perron--Frobenius operators.

$C^*$-algebra and Perron--Frobenius operator are powerful tools that provide connections of the proposed deep RKHM with existing studies.
Since the norm of the Perron--Frobenius operator is described by the Gram matrix associated with the kernel, our bound shows a connection of the deep RKHM with benign overfitting.
In addition, the product structure of a $C^*$-algebra enables us to provide a connection between the deep RKHMs and convolutional neural networks (CNNs).

Our main contributions are
as follows.\vspace{-.2cm}
\begin{itemize}[leftmargin=*,nosep]
\item We propose deep RKHM, which is a generalization of deep kernel method by means of $C^*$-algebra.
We can make use of the products in $C^*$-algebra to induce interactions among data components.
We also show a representer theorem to guarantee the representation of solutions only with given data.
\item We derive a generalization bound for deep RKHM.
The dependency of the bound on the output dimension is milder than existing bounds by virtue of $C^*$-algebras.
In addition, the Perron--Frobenius operators provide a connection of our bound with benign overfitting.
\item We show connections of our study with existing studies such as CNNs and neural tangent kernel.
\end{itemize}\vspace{-.2cm}
We emphasize that our theoretical analysis using $C^*$-algebra and Perron--Frobenius operators gives a new and powerful lens through which one can design and analyze kernel methods.
\begin{table}[t]
    \centering
    \begin{tabular}{c|c| c|c}
    Reproducing space& Output dimension & Shallow & Deep\\
    \hline
        RKHS & 1& $O(\sqrt{1/n})$ \cite{bartlett02} & $O(A^L\sqrt{1/n})$ \\
        vvRKHS & $d$ & $O(\sqrt{d/n})$ \cite{sindwani13,huusari21} & $O(A^L\sqrt{d/n})$ \cite{laforgue19}\\
        RKHM (existing) & $d$ & $O(\sqrt{d/n})$ \cite{hashimoto23_aistats} & --\\
        RKHM (ours) & $d$ & $O({d}^{1/4}/\sqrt{n})$ & $O(B^Ld^{1/4}\sqrt{n})$
    \end{tabular}\vspace{.2cm}\\
    \caption{Existing generalization bounds for kernel methods based on the Rademacher complexity and our bound ($n$: sample size, $A$: Lipschitz constant regarding the positive definite kernel, $B$: The norm of the Perron--Frobenius operator)}
    \label{tab:existing_bounds}
\end{table}

\section{Preliminaries}

\subsection{$C^*$-algebra and reproducing kernel $C^*$-module}
$C^*$-algebra, which is denoted by $\alg$ in the following, is a Banach space equipped with a product and an involution satisfying the $C^*$ identity (condition 3 below).
\begin{definition}[$C^*$-algebra]~\label{def:c*_algebra}
A set $\alg$ is called a {\em $C^*$-algebra} if it satisfies the following conditions:\vspace{-.2cm}
\begin{enumerate}[itemsep=1pt,leftmargin=*]
 \item $\alg$ is an algebra over $\mathbb{C}$ and {equipped with} a bijection $(\cdot)^*:\alg\to\alg$ that satisfies the following conditions for $\alpha,\beta\in\mathbb{C}$ and $a,b\in\alg$:

 \leftskip=10pt
 $\bullet$ $(\alpha a+\beta b)^*=\overline{\alpha}a^*+\overline{\beta}b^*$,\qquad
 $\bullet$ $(ab)^*=b^*a^*$,\qquad
 $\bullet$ $(a^*)^*=a$.

 \leftskip=0pt
 \item $\alg$ is a normed space endowed with $\Vert\cdot\Vert_{\alg}$, and for $a,b\in\alg$, $\Vert ab\Vert_{\alg}\le\Vert a\Vert_{\alg}\Vert b\Vert_{\alg}$ holds.
 In addition, $\alg$ is complete with respect to $\Vert\cdot\Vert_{\alg}$.

 \item For $a\in\alg$, the $C^*$ identity $\Vert a^*a\Vert_{\alg}=\Vert a\Vert_{\alg}^2$ holds.
\end{enumerate}
\end{definition}
\begin{example}
A typical example of $C^*$-algebras is the $C^*$-algebra of $d$ by $d$ circulant matrices.
Another example is the $C^*$-algebra of $d$ by $d$ block matrices with $M$ blocks and their block sizes are 
$\mathbf{m}=(m_1,\ldots,m_M)$.
We denote them by $Circ(d)$ and $Block(\mathbf{m},d)$, respectively.
See~\cite{hashimoto23_aistats,hataya23} for more details about these examples.
\end{example}

We now define RKHM.
Let $\mcl{X}$ be a non-empty set for data.
\begin{definition}[$\alg$-valued positive definite kernel]\label{def:pdk_rkhm}
 An $\alg$-valued map $k:\mcl{X}\times \mcl{X}\to\alg$ is called a {\em positive definite kernel} if it satisfies the following conditions: \smallskip\\
$\bullet$ $k(x,y)=k(y,x)^*$ \;for $x,y\in\mcl{X}$,\\
$\bullet$ $\sum_{i,j=1}^nc_i^*k(x_i,x_j)c_j$ is positive semi-definite \;for $n\in\mathbb{N}$, $c_i\in\alg$, $x_i\in\mcl{X}$.
\end{definition}
Let $\phi: \mcl{X}\to\alg^{\mcl{X}}$ be the {\em feature map} associated with $k$, defined as $\phi(x)=k(\cdot,x)$ for $x\in\mcl{X}$ and let
$\modu_{k,0}=\{\sum_{i=1}^{n}\phi(x_i)c_i|\ n\in\mathbb{N},\ c_i\in\alg,\ x_i\in \mcl{X}\ \red{(i=1,\ldots,n)}\}$.
We can define an $\alg$-valued map $\bracket{\cdot,\cdot}_{\modu_k}:\modu_{k,0}\times \modu_{k,0}\to\alg$ as
\begin{equation*}
\bbracket{\sum_{i=1}^{n}\phi(x_i)c_i,\sum_{j=1}^{l}\phi(y_j)b_j}_{\modu_k}:=\sum_{i=1}^{n}\sum_{j=1}^{l}c_i^*k(x_i,y_j)b_j,
\end{equation*}
which enjoys the reproducing property
$\bracket{\phi(x),f}_{\modu_k}=f(x)$
for $f\in\modu_{k,0}$ and $x\in \mcl{X}$.
The {\em reproducing kernel Hilbert $\alg$-module~(RKHM)} $\modu_k$ associated with $k$ is defined as the completion of $\modu_{k,0}$.  
See, for example, the references~\cite{lance95,murphy90,hashimoto21} for more details about $C^*$-algebra and RKHM.

\subsection{Perron--Frobenius operator on RKHM}


We introduce Perron--Frobenius operator on RKHM~\cite{hashimoto20_orthogonal}.
Let $\mcl{X}_1$ and $\mcl{X}_2$ be nonempty sets and let $k_1$ and $k_2$ be $\alg$-valued positive definite kernels.
Let $\modu_1$ and $\modu_2$ be RKHMs associated with $k_1$ and $k_2$, respectively. 
Let $\phi_1$ and $\phi_2$ be the feature maps of $\modu_1$ and $\modu_2$, respectively.
We begin with the standard notion of linearity in RKHMs.
\begin{definition}[$\alg$-linear]
A linear map $A:\modu_1\to\modu_2$ is called $\alg$-linear if for any $a\in\alg$ and $w\in\modu_1$, we have $A(wa)=(Aw)a$.
\end{definition}
\begin{definition}[Perron--Frobenius operator]
Let $f:\mcl{X}_1\to\mcl{X}_2$ be a map.
The Perron--Frobenius operator with respect to $f$ is an $\alg$-linear map $P_f:\modu_1\to\modu_2$ satisfying
\begin{align*}
P_f\phi_1(x)=\phi_2(f(x)).
\end{align*}
\end{definition}
Note that a Perron--Frobenius operator is not always 
well-defined since $ac=bc$ for $a,b\in\alg$ and nonzero $c\in\alg$ does not always mean $a=b$.
\begin{definition}[$\alg$-linearly independent]
The set $\{\phi_1(x)\,\mid\,x\in\mcl{X}\}$ is called $\alg$-linearly independent if it satisfies the following condition:
For any $n\in\mathbb{N}$, $x_1,\ldots,x_n\in\mcl{X}$, and $c_1,\ldots,c_n\in\alg$, ``$\sum_{i=1}^n\phi_1(x_i)c_i=0$'' is equivalent to ``$c_i=0$ for all $i=1,\ldots,n$''.
\end{definition}
\begin{lemma}\label{lem:well_defined}
If $\{\phi_1(x)\,\mid\,x\in\mcl{X}\}$ is $\alg$-linearly independent, then $P_f$ is well-defined. 
\end{lemma}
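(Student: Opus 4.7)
The plan is to define $P_f$ on the dense submodule $\modu_{1,0}\subset\modu_1$ by the formula
\[
P_f\Big(\sum_{i=1}^n\phi_1(x_i)c_i\Big) := \sum_{i=1}^n\phi_2(f(x_i))c_i,
\]
and then verify that this specification is consistent (independent of the chosen representation) and $\alg$-linear; the well-definedness claim concerns precisely this issue, since a single element of $\modu_{1,0}$ may be written as a finite $\alg$-linear combination of feature-map values in many different ways.

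Consistency of the definition amounts to the following implication: whenever $\sum_{i=1}^n\phi_1(x_i)c_i = \sum_{j=1}^m\phi_1(y_j)b_j$ in $\modu_1$, one must have $\sum_{i=1}^n\phi_2(f(x_i))c_i = \sum_{j=1}^m\phi_2(f(y_j))b_j$ in $\modu_2$. By subtracting, this reduces to showing that $\sum_{k=1}^N\phi_1(z_k)d_k = 0$ implies $\sum_{k=1}^N\phi_2(f(z_k))d_k=0$. The first step is to consolidate: group together the indices $k$ for which the $z_k$ coincide, summing the corresponding $d_k$, so that one arrives at an identity $\sum_{r}\phi_1(\tilde z_r)\tilde d_r = 0$ in which the $\tilde z_r$ are pairwise distinct. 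The $\alg$-linear independence hypothesis then forces $\tilde d_r = 0$ for every $r$.

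Since the grouping respects the action of $f$ (the same indices $k$ still collapse to the same point $f(\tilde z_r)$), the analogous sum in $\modu_2$ equals $\sum_r \phi_2(f(\tilde z_r))\tilde d_r$, and with all $\tilde d_r=0$ this sum vanishes, as required. Thus $P_f$ is unambiguously defined on $\modu_{1,0}$. The $\alg$-linearity $P_f(wa)=(P_fw)a$ for $a\in\alg$, $w\in\modu_{1,0}$, is then immediate from the definition, since right-multiplying the representation of $w$ by $a$ multiplies every $c_i$ by $a$ on the right.

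The only genuine subtlety here is the consolidation step, which is what allows one to apply the $\alg$-linear independence axiom in its stated form (distinct points). Note that we do not claim boundedness or extendability of $P_f$ to the whole completion $\modu_1$; the lemma only asserts that $P_f$ is a well-defined $\alg$-linear map on the module generated by $\{\phi_1(x) \mid x\in\mcl{X}\}$, and the argument above suffices for that.
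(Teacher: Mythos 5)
Your proof is correct and follows essentially the same route as the paper's: use the $\alg$-linear independence hypothesis to force the coefficients of any vanishing combination to be zero, which makes the formula $P_f\bigl(\sum_i\phi_1(x_i)c_i\bigr)=\sum_i\phi_2(f(x_i))c_i$ representation-independent. In fact you are somewhat more careful than the paper, which only compares two representations supported on the \emph{same} points $x_1,\ldots,x_n$; your reduction to a single vanishing combination and the consolidation of repeated points close that small gap.
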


The following lemma gives a sufficient condition for $\alg$-linearly independence. 
\begin{lemma}\label{lem:lin_indep}
Let $k_1=\tilde{k}a$, i.e., $k$ is separable, for an invertible operator $a$ and a complex-valued kernel $\tilde{k}$.
Assume $\{\tilde{\phi}(x)\,\mid\,x\in\mcl{X}\}$ is linearly independent (e.g. $\tilde{k}$ is Gaussian or Laplacian), where $\tilde{\phi}$ is the feature map associated with $\tilde{k}$.
Then, $\{\phi_1(x)\,\mid\,x\in\mcl{X}\}$ is $\alg$-linearly independent.
\end{lemma}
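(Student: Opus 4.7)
The plan is to reduce the $\alg$-valued linear-independence statement to the scalar linear-independence of $\{\tilde{\phi}(x)\,\mid\,x\in\mcl{X}\}$ assumed in the hypothesis, by exploiting the separable form $k_1=\tilde{k}a$ together with continuous linear functionals on $\alg$. Suppose $x_1,\ldots,x_n\in\mcl{X}$ are distinct and $c_1,\ldots,c_n\in\alg$ satisfy $\sum_{i=1}^n\phi_1(x_i)c_i=0$ in $\modu_1$. First I would invoke the reproducing property of $k_1$ to turn this identity in the module into a pointwise $\alg$-valued identity: for every $x\in\mcl{X}$,
\[
0=\Bbracket{\phi_1(x),\sum_{i=1}^n\phi_1(x_i)c_i}_{\modu_1}=\sum_{i=1}^n k_1(x,x_i)c_i=\sum_{i=1}^n\tilde{k}(x,x_i)\,ac_i.
\]
Setting $b_i:=ac_i\in\alg$, this becomes $\sum_{i=1}^n\tilde{k}(x,x_i)b_i=0$ in $\alg$ for all $x\in\mcl{X}$.

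The next step is to peel off the $\alg$ structure. For any continuous linear functional $\ell\in\alg^*$, linearity commutes with the scalar factors $\tilde{k}(x,x_i)\in\mathbb{C}$, so $\sum_{i=1}^n\tilde{k}(x,x_i)\ell(b_i)=0$ for all $x$. By the reproducing property of $\tilde{k}$, this says that $\sum_{i=1}^n \ell(b_i)\tilde{\phi}(x_i)=0$ in the scalar RKHS associated with $\tilde{k}$. The hypothesis of linear independence of $\{\tilde{\phi}(x_i)\}$ then yields $\ell(b_i)=0$ for each $i$. Since $\alg^*$ separates points (Hahn--Banach), $b_i=0$ for all $i$, and invertibility of $a$ finally gives $c_i=a^{-1}b_i=0$, which is precisely the $\alg$-linear independence of $\{\phi_1(x)\,\mid\,x\in\mcl{X}\}$.

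The main conceptual obstacle is the passage from an equation in the noncommutative algebra $\alg$ to an equation in $\mathbb{C}$: one cannot directly invoke scalar linear independence because the coefficients live in $\alg$. The separability assumption $k_1=\tilde{k}a$ is what makes the trick work, since it isolates the noncommutative data in the fixed element $a$ and leaves the $x$-dependence scalar; testing against functionals then provides the bridge. I would also note in passing that positive definiteness of $k_1$ together with invertibility of $a$ forces $a$ to be strictly positive (in particular self-adjoint), so that $a^{-1}\in\alg$ is a bounded element and the final inversion step is justified.
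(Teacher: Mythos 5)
Your proof is correct, but it takes a genuinely different route from the paper's. The paper argues through the Gram matrix: from $\sum_i\phi_1(x_i)c_i=0$ it computes $0=c^*Gc=(G^{1/2}c)^*(G^{1/2}c)$ with $G\in\alg^{n\times n}$ the $\alg$-valued Gram matrix, observes that the separable form gives $G=\tilde{G}\otimes a$ with $\tilde{G}$ the invertible scalar Gram matrix, concludes that $G$ (hence $G^{1/2}$) is invertible, and so $c=0$. You instead use the reproducing property to turn the module identity into the pointwise identity $\sum_i\tilde{k}(x,x_i)\,ac_i=0$ for all $x$, strip off the algebra by testing against functionals $\ell\in\alg^*$, invoke scalar linear independence of $\{\tilde{\phi}(x_i)\}$ to get $\ell(ac_i)=0$, and finish with Hahn--Banach and invertibility of $a$. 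Both arguments exploit the hypotheses identically (separability isolates $a$; invertibility of $a$ lets you cancel it), but yours avoids the operator square root and the tensor-product inverse entirely, and it works verbatim for an arbitrary (even infinite-dimensional) $C^*$-algebra since it only needs that $\alg^*$ separates points. The paper's version has the side benefit of exhibiting the invertibility of the Gram matrix $G$ itself, which is the object that reappears later (e.g.\ in Proposition~\ref{prop:PF_bound}). One small remark: your closing observation that positivity of $k_1$ forces $a>0$ is correct but not needed for the argument --- all you use is that left multiplication by $a$ is injective, which invertibility already gives.
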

\red{Note that separable kernels are widely used in existing literature of vvRKHS (see e.g.~\cite{alvarez2012kernels}).
Lemma~\ref{lem:lin_indep} guarantees the validity of the analysis with Perron--Frobenius operators, at least in the separable case.
This provides a condition for "good kernels" by means of the well-definedness of the Perron--Frobenius operator.}

\paragraph{Notation} 
We denote the Euclidean inner product and norm by $\bracket{\cdot,\cdot}$ and $\Vert\cdot\Vert$ without the subscript.
For $a\in\alg$, let $\vert a\vert_{\alg}$ be the unique element in $\alg$ satisfying $\vert a\vert_{\alg}^2=a^*a$.
If $a$ is a matrix, $\Vert a\Vert_{\opn{HS}}$ is the Hilbert--Schmidt norm of $a$.
The operator norm of a linear operator $A$ on an RKHM is denoted by $\Vert A\Vert_{\opn{op}}$.
All the technical proofs are documented in the supplementary.

\section{Deep RKHM}\label{sec:deep_rkhm}
We now construct an $L$-layer deep RKHM.
Let $\alg=\mathbb{C}^{d\times d}$ be the $C^*$-algebra of $d$ by $d$ matrices.
Let $\alg_0,\ldots,\alg_L$ be $C^*$-subalgebras of $\alg$ and for $j=1,\ldots,L$ and \red{$k_j:\alg_{j-1}\times\alg_{j-1}\to\alg_j$} be an $\alg_j$-valued positive definite kernel for the $j$th layer.
For $j=1,\ldots,L$, we denote by $\modu_j$ the RKHM over $\alg_j$ associated with $k_j$.
In addition, we denote by $\tilde{\modu}_j$ the RKHM over $\alg$ associated with $k_j$.
Note that ${\modu}_j$ is a subspace of $\tilde{\modu}_j$ and for $u,v\in\modu_j$, we have $\bracket{u,v}_{\tilde{\modu}_j}=\bracket{u,v}_{\modu_j}$.
We set the function space corresponding to each layer as $\mcl{F}_L=\{f\in\modu_L\,\mid\,f(x)\in\algr\mbox{ for any }x\in\alg_{L-1},\ \Vert f\Vert_{{\modu}_L}\le B_L\}$ and 
$\mcl{F}_j=\{f\in\modu_j\,\mid\,f(x)\in\algr\mbox{ for any }x\in\alg_{j-1},\ \Vert P_{f}\Vert_{\opn{op}}\le B_j\}$ for $j=1,\ldots,L-1$.
Here, for $f\in\modu_j$ with $j=1,\ldots,L-1$, $P_f$ is the Perron--Frobenius operator with respect to $f$ from $\tilde{\modu}_{j}$ to $\tilde{\modu}_{j+1}$.
We assume the well-definedness of these operators.
Then, we set the class of deep RKHM as 
\begin{align*}
\mcl{F}^{\mr{deep}}_L=\{f_L\circ\cdots\circ f_1\,\mid\,f_j\in\fsp_j\ (j=1,\ldots,L)\}.
\end{align*}
Figure~\ref{fig:overview} schematically shows the structure of the deep RKHM.

\begin{figure}
    \centering
    \includegraphics[scale=0.4]{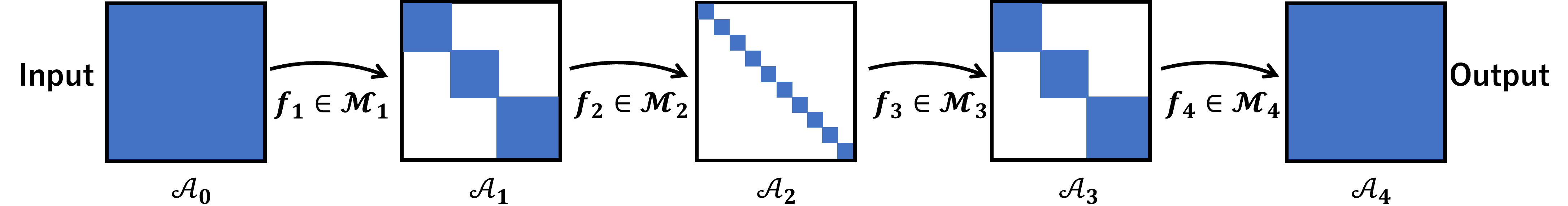}\vspace{-.2cm}
    \caption{Overview of the proposed deep RKHM. The small blue squares represent matrix elements. In the case of the autoencoder (see Example~\ref{ex:autoencoder}), $f_1\circ f_2$ is the encoder, and $f_3\circ f_4$ is the decoder.}
    \label{fig:overview}
\end{figure}

\begin{example}\label{ex:autoencoder}
We can set $\alg_j=Block((m_{1,j},\ldots,m_{M_j,j}),d)$, the $C^*$-algebra of block diagonal matrices.
By setting $M_1\le \cdots\le M_l$ and $M_l\ge \cdots \ge M_L$ for $l<L$,
the number of nonzero elements in $\alg_j$ decreases during $1\le j\le l$ and increases during $l\le j\le L$.
This construction is regarded as an autoencoder,
where the $1\sim l$th layer corresponds to the encoder and the $l+1\sim L$th layer corresponds to the decoder.
\end{example}

\paragraph{Advantage over existing deep kernel methods}
Note that deep kernel methods with RKHSs and vvRKHSs have been proposed~\cite{cho09,bohn19,ober21}.
Autoencoders using RKHSs and vvRKHSs were also proposed~\cite{gholami16,laforgue19}.
We have at least three advantages of deep RKHM over deep vvRKHSs or RKHSs: 1) useful structures of matrices stated in Remark~\ref{rem:product_structure}, 2) availability of the operator norm in the generalization bound stated in Section~\ref{sec:generalization}, 3) connection with CNNs stated in Subsection~\ref{subsec:cnn}.


\section{Generalization Bound}\label{sec:generalization}
We derive a generalization error for deep RKHM.
To derive the bound, we bound the Rademacher complexity, which is one of the typical methods on deriving generalization bounds~\cite{mohri18}.
Let $\Omega$ be a probability space equipped with a probability measure $P$.
We denote the integral $\int_{\Omega}s(\omega)\mr{d}P(\omega)$ of a measurable function $s$ on $\Omega$ by $\mr{E}[s]$.
Let $x_1,\ldots,x_n$ and $y_1,\ldots,y_n$ be input and output samples from the distributions of $\alg_0$ and $\alg_L$-valued random variables $x$ and $y$, respectively.
Let $\sigma_{i,j}\ (i=1,\ldots,d,\ j=1,\ldots,n)$ be i.i.d. Rademacher variables.
Let $\sigma_j=(\sigma_{1,j},\ldots,\sigma_{d,j})$.
For an $\mathbb{R}^d$-valued function class $\mcl{F}$ and $\mathbf{x}=(x_1,\ldots,x_n)$, the empirical Rademacher complexity $\hat{R}_n(\mathbf{x},\mcl{F})$ is defined by
$\hat{R}_n(\mathbf{x},\mcl{F}):=\mr{E}[\sup_{f\in \mcl{F}}\sum_{i=1}^n\bracket{\sigma_i,f(x_i)}]/n$.

\subsection{Bound for shallow RKHMs}
We use the operator norm to derive a bound, whose dependency on output dimension is milder than existing bounds.
Availability of the operator norm is one of the advantages of considering $C^*$-algebras~(RKHMs) instead of vectors~(vvRKHSs).
Note that although we can also use the Hilbert--Schmidt norm for matrices, it tends to be large as the dimension $d$ becomes large.
On the other hand, the operator norm is defined independently of the dimension $d$.
Indeed, the Hilbert--Schmidt norm of a matrix $a\in\mathbb{C}^{d\times d}$ is calculated as $(\sum_{i=1}^d s_i^2)^{1/2}$, where $s_i$ is the $i$th singular value of $a$.
The operator norm of $a$ is the largest singular value of $a$.

To see the derivation of the bound, we first focus on the case of $L=1$, i.e., the network is shallow.
Let $E>0$.
For a space $\mcl{F}$ of $\alg$-valued functions on $\alg_0$, let $\mcl{G}(\mcl{F})=\{(x,y)\mapsto f(x)-y\,\mid\,f\in\mcl{F},\Vert y\Vert_{\alg}\le E\}$.
The following theorem shows a bound for RKHMs with the operator norm.
\begin{theorem}\label{thm:generalization_shallow}
Assume there exists $D>0$ such that $\Vert k_1(x,x)\Vert_{\alg}\le D$ for any $x\in\alg_0$.
Let $\tilde{K}=4\sqrt{2}(\sqrt{D}B_1+E)B_1$ and $\tilde{M}=6(\sqrt{D}B_1+E)^2$.
Let $\delta\in (0,1)$.
Then, for any $g\in\mcl{G}(\mcl{F}_1)$, where $\mcl{F}_1$ is defined in Section~\ref{sec:deep_rkhm}, with probability at least $1-\delta$, we have
\begin{align*}
\Vert\mr{E}[\vert g(x,y)\vert^2_{\alg}]\Vert_{\alg}
\le \bigg\Vert \frac{1}{n}\sum_{i=1}^n\vert g(x_i,y_i)\vert^2_{\alg}\bigg\Vert_{\alg}+\frac{\tilde{K}}{n}\bigg(\sum_{i=1}^n\opn{tr}k_1(x_i,x_i)\bigg)^{1/2}+\tilde{M}\sqrt{\frac{\log(2/\delta)}{2n}}.
\end{align*}
\end{theorem}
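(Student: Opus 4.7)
The plan is to follow the standard Rademacher-complexity recipe, but carried out with the loss taking values in the $C^*$-algebra $\alg$ and all suprema measured in the operator norm $\Vert\cdot\Vert_{\alg}$. Specifically, I would introduce
\[
\Phi(x_1,y_1,\ldots,x_n,y_n):=\sup_{g\in\mcl G(\fsp_1)}\bigg\Vert\mr E[\vert g(x,y)\vert_{\alg}^2]-\frac{1}{n}\sum_{i=1}^n\vert g(x_i,y_i)\vert_{\alg}^2\bigg\Vert_{\alg},
\]
control $\Phi$ by a bounded-differences (McDiarmid) step, relate $\mr E[\Phi]$ to a Rademacher complexity of $\fsp_1$ by symmetrisation, and finally bound that complexity with the reproducing property of the $\alg$-valued kernel $k_1$.

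Boundedness is immediate from the $C^*$-module Cauchy--Schwarz inequality and the reproducing property: for $f\in\fsp_1$ and any $x\in\alg_0$, $\Vert f(x)\Vert_{\alg}=\Vert\bracket{\phi_1(x),f}_{\modu_1}\Vert_{\alg}\le\Vert k_1(x,x)\Vert_{\alg}^{1/2}\Vert f\Vert_{\modu_1}\le\sqrt{D}B_1$, so $\Vert g(x,y)\Vert_{\alg}\le\sqrt{D}B_1+E$ and $\Vert\vert g(x,y)\vert_{\alg}^2\Vert_{\alg}\le(\sqrt{D}B_1+E)^2$. Replacing one datum $(x_i,y_i)$ therefore perturbs $\Phi$ by at most $2(\sqrt{D}B_1+E)^2/n$, and McDiarmid's inequality yields $\Phi\le\mr E[\Phi]+\tilde M\sqrt{\log(2/\delta)/(2n)}$ with $\tilde M=6(\sqrt{D}B_1+E)^2$ (the constant $6$ absorbs the factor relating the bounded-differences constant to the sub-Gaussian tail).

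To bound $\mr E[\Phi]$ I would use a ghost-sample/symmetrisation argument inside the operator norm, passing $\Vert\cdot\Vert_{\alg}$ through expectations via the triangle inequality and convexity; the $-y$ contribution is symmetric and vanishes in expectation, while a contraction step for the map $g\mapsto\vert g\vert_{\alg}^2$, Lipschitz with constant $2(\sqrt{D}B_1+E)$ on the ball of radius $\sqrt{D}B_1+E$, reduces the problem to the Rademacher complexity $\hat R_n(\mathbf x,\fsp_1)$. Invoking the reproducing property $f(x_i)=\bracket{\phi_1(x_i),f}_{\modu_1}$, the Rademacher sum becomes an $\modu_1$-inner product with $f$; $C^*$-module Cauchy--Schwarz factors out $\Vert f\Vert_{\modu_1}\le B_1$, and Jensen's inequality bounds the expected module norm of the random empirical witness by a square root of a double sum in $k_1(x_i,x_j)$. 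Independence of the Rademacher vectors kills the off-diagonal terms, and the diagonal contribution collapses to $\sum_i\opn{tr}\,k_1(x_i,x_i)$; the resulting square root, together with the symmetrisation constant and the Lipschitz factor, yields precisely $\tilde K=4\sqrt 2(\sqrt D B_1+E)B_1$.

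The main technical obstacle is this last step, which is where the payoff from the $C^*$-algebra formalism has to be realised. The standard Rademacher and contraction machinery is stated for real-valued losses, whereas here both the population and empirical risks are $\alg$-valued and compared in operator norm, so one must verify that (i) $\Vert\cdot\Vert_{\alg}$ can be moved through expectations and the symmetrisation without sacrificing the cancellations that drive those arguments, (ii) the quadratic map $g\mapsto\vert g\vert_{\alg}^2$ admits a contraction inequality in the $\alg$-valued setting with the stated Lipschitz constant, and (iii) the second-moment identity for the Rademacher vectors really collapses the double sum over $k_1(x_i,x_j)$ to $\sum_i\opn{tr}\,k_1(x_i,x_i)$ rather than a cruder operator-norm bound, since this is the step that produces the improved dependence on the output dimension $d$ advertised in Table~\ref{tab:existing_bounds}.
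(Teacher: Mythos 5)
There is a genuine gap, and it sits exactly where you flagged it. Your plan runs the entire Rademacher recipe at the level of $\alg$-valued quantities compared in the operator norm: you symmetrise $\mr E[\vert g\vert_{\alg}^2]-\frac1n\sum_i\vert g(x_i,y_i)\vert_{\alg}^2$ inside $\Vert\cdot\Vert_{\alg}$ and then invoke ``a contraction step for the map $g\mapsto\vert g\vert_{\alg}^2$.'' Symmetrisation does survive in a Banach space (the norm is convex), but the contraction step does not: Ledoux--Talagrand/Maurer-type contraction inequalities are statements about \emph{real-valued} Rademacher sums $\mr E[\sup_f\sum_i\sigma_i h_i(f(x_i))]$, and there is no off-the-shelf analogue for $\mr E[\sup_g\Vert\sum_i\sigma_i\vert g(z_i)\vert_{\alg}^2\Vert_{\alg}]$ with a matrix-valued quadratic map $a\mapsto a^*a$. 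As written, the argument does not close, and the claim that the resulting complexity ``collapses to $\sum_i\opn{tr}k_1(x_i,x_i)$'' is asserted rather than derived.

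The paper's proof supplies precisely the missing device: it fixes a unit vector $p\in\mathbb{R}^d$ and works with the scalar loss $\Vert(f(x)-y)p\Vert^2=\bracket{p,\vert g(x,y)\vert_{\alg}^2p}$. This makes every step standard: McDiarmid and symmetrisation come packaged as Theorem 3.3 of Mohri et al.\ for the real-valued class $\tilde{\mcl G}(\mcl F,p)$ (Lemma~\ref{lem:generalization_shallow}); the contraction is Maurer's vector-valued contraction applied to $z\mapsto\Vert z-y_ip\Vert^2$ on $\{f(x)p\}\subseteq\mathbb{R}^d$, with Lipschitz constant $2(C+E)$ (Lemma~\ref{lem:maurer}); and the kernel step uses the semi-inner product $(f,g)\mapsto\bracket{p,\bracket{f,g}_{\tilde\modu_1}p}$, Cauchy--Schwarz, and Jensen to collapse the double sum to $\sum_i\opn{tr}k_1(x_i,x_i)$ (Lemma~\ref{lem:rademacher_bound}). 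Since the resulting bound is independent of $p$, the operator norm is recovered at the end from $\Vert\mr E[\vert g\vert_{\alg}^2]\Vert_{\alg}=\Vert\mr E[\vert g\vert_{\alg}^2]^{1/2}\Vert_{\alg}^2=\sup_{\Vert p\Vert=1}\Vert\mr E[\vert g\vert_{\alg}^2]^{1/2}p\Vert^2$. Your boundedness computation, your Lipschitz constant, and your diagonal-collapse heuristic all match the paper; what is missing is the reduction to a fixed direction $p$ that makes those ingredients legitimately applicable. To repair your write-up, either adopt that reduction or supply a proof of an operator-norm contraction inequality, which is a nontrivial result you cannot simply cite.
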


Theorem~\ref{thm:generalization_shallow} is derived by the following lemmas.
We first fix a vector $p\in\mathbb{R}^d$ and consider the operator-valued loss function acting on $p$.
We first show a relation between the generalization error and the Rademacher complexity of vector-valued functions.
Then, we bound the Rademacher complexity.
Since the bound does not depend on $p$, we can finally remove the dependency on $p$.
\begin{lemma}\label{lem:generalization_shallow}
Let $\mcl{F}$ be a function class of $\algr$-valued functions on $\alg_0$ bounded by $C$ (i.e., $\Vert f(x)\Vert_{\alg}\le C$ for any $x\in\alg_0$).
Let $\tilde{\mcl{G}}(\mcl{F},p)=\{(x,y)\mapsto\Vert (f(x)-y)p\Vert^2\,\mid\,f\in\mcl{F},\Vert y\Vert_{\alg}\le E\}$ and $M=2(C+E)^2$.
Let $p\in\mathbb{R}^d$ satisfy $\Vert p\Vert=1$ and let $\delta\in (0,1)$.
Then, for any $g\in\tilde{\mcl{G}}(\mcl{F},p)$, with probability at least $1-\delta$, we have
\begin{align*}
\Vert\mr{E}[\vert g(x,y)\vert^2_{\alg}]^{1/2}p\Vert^2\le \bigg\Vert \frac{1}{n}\sum_{i=1}^n\vert g(x_i,y_i)\vert^2_{\alg}\bigg\Vert_{\alg}+2\hat{R}_n(\mathbf{x},\tilde{\mcl{G}}(\mcl{F},p))+3M\sqrt{\frac{\log(2/\delta)}{2n}}.
\end{align*}
\end{lemma}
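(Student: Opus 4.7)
The plan is to reduce the $\alg$-valued generalization question to a purely scalar one. For a fixed unit vector $p$, every element $g$ of $\tilde{\mcl{G}}(\mcl{F},p)$ is a real-valued function that is uniformly bounded by a simple constant, so one can invoke the textbook Rademacher-based generalization bound (symmetrization plus McDiarmid's inequality, as in~\cite{mohri18}) off the shelf and then translate the resulting scalar inequality into the operator-norm formulation stated in the lemma.

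First, I verify the uniform bound. From $\Vert f(x)\Vert_\alg\le C$, $\Vert y\Vert_\alg\le E$, and the triangle inequality, $\Vert f(x)-y\Vert_\alg\le C+E$, so $\Vert(f(x)-y)p\Vert\le(C+E)\Vert p\Vert=C+E$ since the operator norm dominates the action on any unit vector. Hence $0\le g(x,y)\le(C+E)^2\le M$ for every $g\in\tilde{\mcl{G}}(\mcl{F},p)$. The standard scalar generalization bound then yields, with probability at least $1-\delta$,
\begin{equation*}
\mr{E}[g(x,y)]\le\frac{1}{n}\sum_{i=1}^{n}g(x_i,y_i)+2\hat{R}_n(\mathbf{x},\tilde{\mcl{G}}(\mcl{F},p))+3M\sqrt{\frac{\log(2/\delta)}{2n}}.
\end{equation*}

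Second, I rewrite both scalar averages as $\alg$-valued quadratic forms in $p$. The identity $g(x,y)=\bracket{p,(f(x)-y)^*(f(x)-y)p}=\bracket{p,\vert f(x)-y\vert^2_\alg p}$ together with linearity of expectation gives
\begin{equation*}
\mr{E}[g(x,y)]=\bracket{p,\mr{E}[\vert f(x)-y\vert^2_\alg]p}=\Vert\mr{E}[\vert f(x)-y\vert^2_\alg]^{1/2}p\Vert^2,
\end{equation*}
which matches the lemma's left-hand side under the shorthand $\vert g\vert^2_\alg:=\vert f(x)-y\vert^2_\alg$. Analogously,
\begin{equation*}
\frac{1}{n}\sum_{i=1}^{n}g(x_i,y_i)=\Bbracket{p,\Big(\tfrac{1}{n}\sum_{i=1}^{n}\vert f(x_i)-y_i\vert^2_\alg\Big)p}\le\bigg\Vert\frac{1}{n}\sum_{i=1}^{n}\vert f(x_i)-y_i\vert^2_\alg\bigg\Vert_\alg,
\end{equation*}
using $\Vert p\Vert=1$ and the fact that $\bracket{p,Ap}\le\Vert A\Vert_\alg$ for any positive semidefinite $A\in\alg$. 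Substituting these equalities into the scalar bound delivers the inequality claimed in the lemma.

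No substantive obstacle is anticipated: the only real work is the reduction to scalars via $p$, after which everything is the classical Rademacher-complexity machinery. The single point that demands care is the notational abuse in the statement---$\vert g\vert^2_\alg$ is written even though $g$ itself is scalar-valued---and throughout the proof it must be interpreted as the $\alg$-valued residual $\vert f(x)-y\vert^2_\alg$ attached to $g$ via $g(x,y)=\bracket{p,\vert f(x)-y\vert^2_\alg p}$.
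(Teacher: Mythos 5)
Your proof is correct and follows essentially the same route as the paper's: both reduce to the scalar Rademacher bound of Mohri et al.\ (Theorem 3.3) applied to the real-valued class $\tilde{\mcl{G}}(\mcl{F},p)$, identify $\mr{E}[g]$ and the empirical mean with the quadratic forms $\bracket{p,\,\cdot\,p}$ of the $\alg$-valued residual, and finish with $\bracket{p,Ap}\le\Vert A\Vert_{\alg}$ for $\Vert p\Vert=1$. The only cosmetic difference is that you bound the values of $g$ directly by $(C+E)^2$ whereas the paper bounds the difference of two function values by $2(C+E)^2=M$; both justify the $3M\sqrt{\log(2/\delta)/(2n)}$ term, and your reading of the notational overload $\vert g\vert_{\alg}^2=\vert f(x)-y\vert_{\alg}^2$ matches the paper's intent.
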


\begin{lemma}\label{lem:maurer}
With the same notations in Lemma~\ref{lem:generalization_shallow}, let $K=2\sqrt{2}(C+E)$.
Then, we have
$\hat{R}_n(\mathbf{x},\mcl{G}(\mcl{F},p))\le K\hat{R}_n(\mathbf{x},\mcl{F}p)$,
where $\mcl{F}p=\{x\mapsto f(x)p\,\mid\,f\in\mcl{F}\}$.
\end{lemma}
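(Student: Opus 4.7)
The plan is to recognize Lemma~\ref{lem:maurer} as a direct application of Maurer's vector-contraction inequality for Rademacher complexities (which explains the label \ref{lem:maurer}).

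First, I would rewrite each element of $\tilde{\mcl{G}}(\mcl{F},p)$ as a composition of a function in $\mcl{F}p$ with a scalar Lipschitz map. For the fixed data $(x_i,y_i)_{i=1}^n$, define $\phi_i:\mathbb{R}^d\to\mathbb{R}$ by $\phi_i(v)=\Vert v-y_ip\Vert^2$ and, for each $f\in\mcl{F}$, set $h_f(x)=f(x)p\in\mathbb{R}^d$. Then $\Vert(f(x_i)-y_i)p\Vert^2=\phi_i(h_f(x_i))$, and the class of such $h_f$ is exactly $\mcl{F}p$.

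Second, I would bound the Lipschitz constant of each $\phi_i$ on the relevant region. Since $\Vert f(x)\Vert_{\alg}\le C$ and $\Vert p\Vert=1$, and since the $C^*$-norm on $\mathbb{C}^{d\times d}$ is the operator norm, the image of $h_f$ lies in the Euclidean ball $\{v\in\mathbb{R}^d:\Vert v\Vert\le C\}$. Moreover $\Vert y_ip\Vert\le E$, so for $\Vert v\Vert\le C$ the gradient $\nabla\phi_i(v)=2(v-y_ip)$ has Euclidean norm at most $2(C+E)$. Hence each $\phi_i$ is $2(C+E)$-Lipschitz on the relevant domain.

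Third, I would invoke Maurer's vector-contraction inequality: for any class $\mcl{H}$ of $\mathbb{R}^d$-valued functions and any $L$-Lipschitz maps $\phi_i:\mathbb{R}^d\to\mathbb{R}$,
\begin{equation*}
\mr{E}\sup_{h\in\mcl{H}}\sum_{i=1}^n\sigma_i\phi_i(h(x_i))\le\sqrt{2}\,L\,\mr{E}\sup_{h\in\mcl{H}}\sum_{i=1}^n\sum_{j=1}^d\sigma_{i,j}h_j(x_i),
\end{equation*}
where the $\sigma_i$ and $\sigma_{i,j}$ are independent Rademacher variables. Applying this with $\mcl{H}=\mcl{F}p$ and $L=2(C+E)$, then dividing both sides by $n$, yields $\hat{R}_n(\mathbf{x},\tilde{\mcl{G}}(\mcl{F},p))\le 2\sqrt{2}(C+E)\,\hat{R}_n(\mathbf{x},\mcl{F}p)=K\,\hat{R}_n(\mathbf{x},\mcl{F}p)$, which is the claim.

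The main subtlety is that the scalar Talagrand contraction cannot be used directly: the maps $\phi_i$ have vector-valued input, and the right-hand side of the claim is a genuinely vector-valued Rademacher complexity (with $d$ independent Rademacher signs per sample) matching the paper's definition of $\hat{R}_n$. Preserving this structure, together with the correct constant $2\sqrt{2}(C+E)$, is exactly what Maurer's inequality provides---the extra $\sqrt{2}$ factor originates there. Everything else, namely verifying the Lipschitz constant and checking that $h_f$ stays in the ball of radius $C$, is routine.
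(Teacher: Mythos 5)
Your proposal is correct and follows essentially the same route as the paper: both rewrite the squared loss as the composition of the map $z\mapsto\Vert z-y_ip\Vert^2$ with the $\mathbb{R}^d$-valued class $\mcl{F}p$, verify that this map is $2(C+E)$-Lipschitz on the ball where $\Vert f(x)p\Vert\le C$, and then invoke Maurer's vector-contraction inequality (Corollary 4 of Maurer, 2016) to obtain the extra $\sqrt{2}$ and hence $K=2\sqrt{2}(C+E)$. The only cosmetic difference is that you bound the Lipschitz constant via the gradient whereas the paper does a direct algebraic expansion; the two are equivalent.
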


\begin{lemma}\label{lem:rademacher_bound}
Let $p\in\mathbb{R}^d$ satisfy $\Vert p\Vert=1$.
For $\mcl{F}_1$ defined in Section~\ref{sec:deep_rkhm},
we have
\begin{align*}
\hat{R}_n(\mathbf{x},\mcl{F}_1p)\le \frac{B_1}{n}\Big(\sum_{i=1}^n\opn{tr}(k(x_i,x_i))\Big)^{1/2}.
\end{align*}
\end{lemma}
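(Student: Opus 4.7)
The plan is to express the Rademacher sum as the trace of a single $\alg$-valued inner product in $\modu_1$ via the reproducing property, and then apply Lance's $C^*$-valued Cauchy--Schwarz. The critical observation will be that the ``test vector'' on the $\sigma$-side has a rank-one self-inner-product, because the module coefficients $\sigma_i p^\top$ are themselves rank one; this rank-one structure is what prevents a $\sqrt{d}$ factor from appearing, which a naive scalar Cauchy--Schwarz ($\vert\opn{tr}\bracket{u,v}\vert\le\sqrt{\opn{tr}\bracket{u,u}\,\opn{tr}\bracket{v,v}}$) on the trace inner product would introduce via $\opn{tr}\bracket{f,f}_{\modu_1}\le d\Vert f\Vert_{\modu_1}^2$.

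Concretely, I set $u_\sigma:=\sum_{i=1}^n \phi_1(x_i)(\sigma_i p^\top)\in\modu_1$. Using the reproducing property $f(x_i)=\bracket{\phi_1(x_i),f}_{\modu_1}$ together with the conjugate $\alg$-linearity in the first slot of $\bracket{\cdot,\cdot}_{\modu_1}$, I would rewrite
\begin{equation*}
\sum_{i=1}^n\bracket{\sigma_i,f(x_i)p}=\sum_{i=1}^n\opn{tr}\bigl((\sigma_i p^\top)^* f(x_i)\bigr)=\opn{tr}\bracket{u_\sigma,f}_{\modu_1}.
\end{equation*}
A direct expansion then yields $\bracket{u_\sigma,u_\sigma}_{\modu_1}=A(\sigma)\,pp^\top$ with the scalar $A(\sigma):=\sum_{i,j=1}^n\sigma_i^\top k_1(x_i,x_j)\sigma_j$, and positivity of the module inner product (applied in particular to the coefficients $\sigma_i q^\top$ for arbitrary $q$) forces $A(\sigma)\ge 0$.

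Next, Lance's Cauchy--Schwarz in the Hilbert $\alg$-module $\modu_1$ gives
\begin{equation*}
\bracket{u_\sigma,f}_{\modu_1}\bracket{f,u_\sigma}_{\modu_1}\le \Vert f\Vert_{\modu_1}^2\,\bracket{u_\sigma,u_\sigma}_{\modu_1}\le B_1^2\,A(\sigma)\,pp^\top.
\end{equation*}
Since $\Vert p\Vert=1$ makes $pp^\top$ a rank-one orthogonal projection, the right-hand side has rank one; hence $M:=\bracket{u_\sigma,f}_{\modu_1}$ itself has rank at most one, and its single nonzero singular value is at most $B_1\sqrt{A(\sigma)}$. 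Consequently $\vert\opn{tr} M\vert\le\Vert M\Vert_1\le B_1\sqrt{A(\sigma)}$, so $\sup_{f\in\mcl{F}_1}\sum_{i=1}^n\bracket{\sigma_i,f(x_i)p}\le B_1\sqrt{A(\sigma)}$.

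Finally I average over $\sigma$: Jensen's inequality gives $\mr{E}[\sqrt{A(\sigma)}]\le\sqrt{\mr{E}[A(\sigma)]}$, and since the $\sigma_{i,k}$ are i.i.d.\ Rademacher with $\mr{E}[\sigma_{i,k}\sigma_{j,l}]=\delta_{ij}\delta_{kl}$ the cross terms vanish and $\mr{E}[A(\sigma)]=\sum_{i=1}^n\opn{tr} k_1(x_i,x_i)$. Dividing by $n$ gives the claimed bound. The main obstacle is the rank-one propagation step: it is Lance's \emph{operator-valued} Cauchy--Schwarz together with the observation that $\bracket{u_\sigma,u_\sigma}_{\modu_1}$ is proportional to $pp^\top$ that allows the trace of the inner product to be controlled by $B_1\sqrt{A(\sigma)}$ rather than by $\sqrt{d}\,B_1\sqrt{A(\sigma)}$.
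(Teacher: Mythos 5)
Your proof is correct and takes essentially the same approach as the paper: both encode the Rademacher vectors as rank-one module coefficients $\sigma_ip^*$, exploit that $\bracket{u_\sigma,u_\sigma}=A(\sigma)\,pp^*$ (which is exactly what avoids the extra dimension factor), and conclude via Jensen's inequality and $\mr{E}[A(\sigma)]=\sum_{i=1}^n\opn{tr}k_1(x_i,x_i)$. The differences are cosmetic --- the paper applies Cauchy--Schwarz to the scalar semi-inner product $(f,g)_{\tilde{\modu}_1,p}=\bracket{p,\bracket{f,g}_{\tilde{\modu}_1}p}$ where you use the operator-valued Cauchy--Schwarz followed by the rank-one/trace-norm argument, and since the coefficients $\sigma_ip^*$ lie in $\alg$ rather than $\alg_1$, your inner products should formally be taken in $\tilde{\modu}_1$ as the paper does.
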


\subsection{Bound for deep RKHMs}
We now generalize Theorem~\ref{thm:generalization_shallow} to the deep setting ($L\ge 2$) using the Perron--Frobenius operators.
\begin{theorem}\label{thm:generalization_deep}
Assume there exists $D>0$ such that $\Vert k_L(x,x)\Vert_{\alg}\le D$ for any $x\in\alg_0$.
Let $\tilde{K}=4\sqrt{2}(\sqrt{D}B_L+E)B_1\cdots B_L$ and $\tilde{M}=6(\sqrt{D}B_L+E)^2$.
Let $\delta\in (0,1)$.
Then, for any $g\in\mcl{G}(\mcl{F}_L^{\opn{deep}})$, with probability at least $1-\delta$, we have
\begin{align*}
\Vert\mr{E}[\vert g(x,y)\vert_{\alg}^2]\Vert_{\alg}
\le \bigg\Vert \frac{1}{n}\sum_{i=1}^n\vert g(x_i,y_i)\vert_{\alg}^2\bigg\Vert_{\alg}+\frac{\tilde{K}}{n}\Big(\sum_{i=1}^n\opn{tr}k_1(x_i,x_i)\Big)^{1/2}+\tilde{M}\sqrt{\frac{\log(2/\delta)}{2n}}.
\end{align*}
\end{theorem}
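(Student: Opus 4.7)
The plan is to mirror the three-step strategy used for Theorem~\ref{thm:generalization_shallow}, with the Perron--Frobenius operators encoding the composition across layers. Fix a unit vector $p \in \mathbb{R}^d$. By the reproducing property of $\modu_L$ and Cauchy--Schwarz applied to the outermost factor $f_L$, every $f \in \mcl{F}_L^{\mr{deep}}$ satisfies $\Vert f(x)\Vert_\alg \le \sqrt{D} B_L$, so Lemma~\ref{lem:generalization_shallow} applies with $C = \sqrt{D} B_L$. Combined with Lemma~\ref{lem:maurer}, the whole problem reduces to proving the deep analogue of Lemma~\ref{lem:rademacher_bound},
\begin{align*}
\hat{R}_n(\mathbf{x}, \mcl{F}_L^{\mr{deep}} p) \le \frac{B_1 \cdots B_L}{n} \bigg(\sum_{i=1}^n \opn{tr}\, k_1(x_i, x_i)\bigg)^{1/2}.
\end{align*}

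The central observation is that the Perron--Frobenius operators linearize the composition. Iterating $\phi_{j+1}(f_j(z)) = P_{f_j} \phi_j(z)$ together with the reproducing property of $\modu_L$ yields
\begin{align*}
(f_L \circ \cdots \circ f_1)(x) = \bbracket{P_{f_{L-1}} \cdots P_{f_1} \phi_1(x),\, f_L}_{\modu_L},
\end{align*}
expressing the deep network as the image of the first-layer feature $\phi_1(x)$ under a product $P := P_{f_{L-1}} \cdots P_{f_1}$ of $\alg$-linear operators with $\Vert P\Vert_{\mr{op}} \le B_1 \cdots B_{L-1}$, paired with a final module element $f_L$ of norm at most $B_L$. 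Substituting into the Rademacher sum $\sum_i \bracket{\sigma_i, f(x_i) p}$, using the $\alg$-linearity of the $P_{f_j}$'s to commute them past the scalar Rademacher weights, and invoking the module Cauchy--Schwarz inequality $\Vert\bracket{u, v}_\alg\Vert_\alg \le \Vert u\Vert_\modu \Vert v\Vert_\modu$ pulls the constant $B_1 \cdots B_L$ outside. Jensen's inequality combined with the reproducing property of $k_1$ then collapses the residual Rademacher average over $\phi_1(x_i)$ into $(\sum_i \opn{tr}\, k_1(x_i, x_i))^{1/2}$, exactly as in the shallow case.

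I expect the main obstacle to be the simultaneous supremum over $f_1, \ldots, f_L$: since each $P_{f_j}$ itself depends on the parameter $f_j$ being optimized, a naive layer-by-layer Ledoux--Talagrand contraction is unavailable. The route above sidesteps this by first packaging the entire composition into a single operator product via Perron--Frobenius, and only afterwards taking the supremum, so that the product bound $\prod_j \Vert P_{f_j}\Vert_{\mr{op}}$ is sharp in a single application of the $C^*$-module Cauchy--Schwarz inequality. A secondary subtlety is ensuring that every intermediate composition lies in the auxiliary module $\tilde{\modu}_j$ where $P_{f_j}$ is defined, which is guaranteed by the well-definedness assumption stated in Section~\ref{sec:deep_rkhm}. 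Since none of the estimates depend on $p$, the final passage to $\sup_{\Vert p\Vert=1}$ converts the vectorized inequality into the operator-norm statement of the theorem.
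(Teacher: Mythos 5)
Your proposal is correct and follows essentially the same route as the paper: reduce via Lemma~\ref{lem:generalization_shallow} (with $C=\sqrt{D}B_L$) and Lemma~\ref{lem:maurer}, then bound $\hat{R}_n(\mathbf{x},\mcl{F}^{\mr{deep}}_Lp)$ through the identity $f_L\circ\cdots\circ f_1(x)=\bracket{P_{f_{L-1}}\cdots P_{f_1}\phi_1(x),f_L}_{\tilde{\modu}_L}$, which is exactly the content of Proposition~\ref{prop:rademacher_deep} and its corollary. The only cosmetic difference is that the paper carries out the Cauchy--Schwarz step in the semi-inner product $(\cdot,\cdot)_{\tilde{\modu}_L,p}$ and invokes Lance's operator inequality $\vert Aw\vert^2_{\alg}\le\Vert A\Vert_{\opn{op}}^2\vert w\vert^2_{\alg}$ (restricting the Perron--Frobenius product to the submodule $\tilde{\mcl{V}}(\bx)$), whereas you state the norm-level Cauchy--Schwarz directly; both yield the same constant $B_1\cdots B_L$.
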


We use the following proposition and Lemmas~\ref{lem:generalization_shallow} and \ref{lem:maurer} to show Theorem~\ref{thm:generalization_deep}.
The key idea of the proof is that by the reproducing property and the definition of the Perron--Frobenius operator, we get $f_L\circ \cdots \circ f_1(x)=\bracket{\phi_L(f_{L-1}\circ\cdots \circ f_1(x)),f_L}_{\tilde{\modu}_L}\!\!=\bracket{P_{f_{L-1}}\cdots P_{f_1}\phi(x),f_L}_{\tilde{\modu}_L}$.
\begin{proposition}\label{prop:rademacher_deep}
Let $p\in\mathbb{R}^d$ satisfy $\Vert p\Vert=1$.
Then, we have
\begin{align*}
\hat{R}_n(\mathbf{x},\mcl{F}^{\mr{deep}}_Lp)\le \frac{1}{n}\sup_{\red{(f_j\in \fsp_j)_j}}\Vert P_{f_{L-1}}\cdots P_{f_{1}}\vert_{\tilde{\mcl{V}}(\bx)}\Vert_{\opn{op}}\;\Vert f_L\Vert_{{\modu}_L}\;\Big(\sum_{i=1}^n\opn{tr}(k_1(x_i,x_i))\Big)^{1/2}.
\end{align*}
Here, $\tilde{\mcl{V}}(\bx)$ is the submodule of $\tilde{\modu_1}$ generated by $\phi_1(x_1),\ldots\phi_1(x_n)$.
\end{proposition}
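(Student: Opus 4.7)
The plan is to expand the empirical Rademacher complexity, use the hint to rewrite each term via Perron--Frobenius operators, and then exploit the right $\alg$-module structure together with a rank-one observation to keep the estimate free of dimension factors. Setting $P := P_{f_{L-1}}\cdots P_{f_{1}}$ and $u_i := P\phi_1(x_i)\in\tilde{\modu}_L$, the hint gives $f_L\circ\cdots\circ f_1(x_i)=\bracket{u_i,f_L}_{\tilde{\modu}_L}$, so a generic Rademacher term equals $\sigma_i^\top\bracket{u_i,f_L}_{\tilde{\modu}_L}\,p$.

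First I would use cyclicity of the trace together with the right $\alg$-module identity $a\bracket{u,v}=\bracket{ua^*,v}$ (applied with $a=p\sigma_i^\top\in\alg$) to obtain the compact representation
\begin{equation*}
\sum_{i=1}^n\sigma_i^\top f(x_i)\,p \;=\; \opn{tr}\bracket{w,f_L}_{\tilde{\modu}_L},\qquad w:=\sum_{i=1}^n u_i\sigma_ip^\top\in\tilde{\modu}_L.
\end{equation*}
The crucial observation is that $\bracket{w,f_L}_{\tilde{\modu}_L}=\sum_i p\sigma_i^\top\bracket{u_i,f_L}_{\tilde{\modu}_L}=p\,r^\top$ for some row vector $r\in\mathbb{R}^d$, i.e.\ this matrix has rank at most one. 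Hence its trace norm coincides with its operator norm, and combining with the Cauchy--Schwarz inequality for Hilbert $C^*$-modules yields
\begin{equation*}
|\opn{tr}\bracket{w,f_L}_{\tilde{\modu}_L}|\;\le\;\Vert\bracket{w,f_L}_{\tilde{\modu}_L}\Vert_{\alg}\;\le\;\Vert w\Vert_{\tilde{\modu}_L}\,\Vert f_L\Vert_{{\modu}_L},
\end{equation*}
where I also use that the $\modu_L$- and $\tilde{\modu}_L$-inner products agree on $\modu_L$. This rank-one reduction is the step I expect to be the main obstacle: a naive bound via the trace inner product on $\tilde{\modu}_L$ would cost an extra factor of $\sqrt{d}$, ruining the targeted $d^{1/4}$ dependence of the overall generalization bound.

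Next I would use $\alg$-linearity of each $P_{f_j}$ to write $w=P\bigl(\sum_i\phi_1(x_i)\sigma_ip^\top\bigr)$, whose argument lies in $\tilde{\mcl{V}}(\bx)$; hence $\Vert w\Vert_{\tilde{\modu}_L}\le \Vert P\vert_{\tilde{\mcl{V}}(\bx)}\Vert_{\opn{op}}\,\bigl\Vert\sum_i\phi_1(x_i)\sigma_ip^\top\bigr\Vert_{\tilde{\modu}_1}$. A short computation using the reproducing property gives $\bracket{\sum_i\phi_1(x_i)\sigma_ip^\top,\sum_j\phi_1(x_j)\sigma_jp^\top}_{\tilde{\modu}_1}=\gamma\,pp^\top$ with the scalar $\gamma:=\sum_{i,j}\sigma_i^\top k_1(x_i,x_j)\sigma_j\ge 0$, so the squared module-norm equals $\gamma\,\Vert pp^\top\Vert_{\opn{op}}=\gamma$ (since $\Vert p\Vert=1$).

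Finally, taking the expectation over the Rademacher variables, independence of the $\sigma_{l,i}$ gives $\mr{E}[\sigma_i^\top k_1(x_i,x_j)\sigma_j]=\delta_{ij}\opn{tr}k_1(x_i,x_i)$, so $\mr{E}[\gamma]=\sum_{i=1}^n\opn{tr}k_1(x_i,x_i)$; Jensen's inequality then yields $\mr{E}[\sqrt{\gamma}]\le\bigl(\sum_i\opn{tr}k_1(x_i,x_i)\bigr)^{1/2}$. Combining the displays above, pulling the supremum over $f_1,\ldots,f_L$ outside (the remaining factor $\mr{E}[\sqrt{\gamma}]$ is $f$-independent), and dividing by $n$ yields the claimed bound.
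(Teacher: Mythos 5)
Your proof is correct and follows the same architecture as the paper's: rewrite $f_L\circ\cdots\circ f_1(x_i)$ via the reproducing property and the Perron--Frobenius operators, absorb the Rademacher vectors into the module argument to form $w=\sum_i P_{f_{L-1}}\cdots P_{f_1}\phi_1(x_i)(\sigma_ip^*)$, apply a Cauchy--Schwarz step against $f_L$, restrict the composed Perron--Frobenius operator to $\tilde{\mcl{V}}(\bx)$, evaluate the Gram quantity $\gamma=\sum_{i,j}\sigma_i^*k_1(x_i,x_j)\sigma_j$, and finish with independence of the $\sigma_{l,i}$ plus Jensen. The only place you genuinely diverge is the contraction step. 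The paper keeps $p$ on both sides throughout, works with the semi-inner product $(f,g)_{\tilde{\modu}_L,p}=\bracket{p,\bracket{f,g}_{\tilde{\modu}_L}p}$, and needs Lance's operator inequality $\vert Aw\vert^2\le\Vert A\Vert_{\opn{op}}^2\vert w\vert^2$ to pull the norm of the restricted operator out of the matrix-valued modulus before contracting with $p$. You instead pass to the trace, observe that $\bracket{w,f_L}_{\tilde{\modu}_L}=pr^\top$ is rank one so its trace is dominated by its operator norm, and then use only the ordinary $C^*$-module Cauchy--Schwarz and the scalar bound $\Vert Pv\Vert\le\Vert P\vert_{\tilde{\mcl{V}}(\bx)}\Vert_{\opn{op}}\Vert v\Vert$. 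The two routes compute the identical scalar, since $p^*\bracket{w,f_L}_{\tilde{\modu}_L}p=\opn{tr}(pr^\top)=r^\top p$, and the identical value $\Vert\,\vert\sum_i\phi_1(x_i)\sigma_ip^*\vert_{\tilde{\modu}_1}^2\Vert_{\alg}=\bracket{p,\gamma pp^*p}=\gamma$, so nothing is lost; your variant trades the operator-valued inequality of Lance for the rank-one observation, which you correctly flag as the step that prevents a spurious $\sqrt{d}$.
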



\begin{corollary}
Let $p\in\mathbb{R}^d$ satisfy $\Vert p\Vert=1$.
Then, we have
\begin{align*}
\hat{R}_n(\mathbf{x},\mcl{F}^{\mr{deep}}_Lp)\le \frac{1}{n}B_{1}\cdots B_L\;\Big(\sum_{i=1}^n\opn{tr}(k_1(x_i,x_i))\Big)^{1/2}.
\end{align*}
\end{corollary}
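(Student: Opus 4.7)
The plan is to derive the corollary as an immediate consequence of Proposition \ref{prop:rademacher_deep} by upper bounding each factor on its right-hand side using only the size constraints built into the definitions of $\fsp_1,\ldots,\fsp_L$ in Section \ref{sec:deep_rkhm}.

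First, I would recall the definitions: for $j=1,\ldots,L-1$ the class $\fsp_j$ requires $\Vert P_{f_j}\Vert_{\opn{op}}\le B_j$, where $P_{f_j}:\tilde{\modu}_j\to\tilde{\modu}_{j+1}$ is the Perron--Frobenius operator (well-defined by the standing assumption in Section \ref{sec:deep_rkhm}, and guaranteed for instance by Lemma \ref{lem:lin_indep} in the separable-kernel case), while $\fsp_L$ requires $\Vert f_L\Vert_{\modu_L}\le B_L$. So for any admissible tuple $(f_1,\ldots,f_L)$ we already control the last factor in Proposition \ref{prop:rademacher_deep} by $B_L$, and it only remains to control the operator-norm factor.

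Second, since the operator norm on bounded $\alg$-linear maps between Hilbert $C^*$-modules is submultiplicative, and since restricting a bounded operator to a submodule cannot increase its norm, we have
\[
\bigl\Vert (P_{f_{L-1}}\cdots P_{f_1})\vert_{\tilde{\mcl{V}}(\bx)}\bigr\Vert_{\opn{op}}
\le \bigl\Vert P_{f_{L-1}}\cdots P_{f_1}\bigr\Vert_{\opn{op}}
\le \prod_{j=1}^{L-1}\Vert P_{f_j}\Vert_{\opn{op}}
\le B_1\cdots B_{L-1}.
\]
Multiplying by $\Vert f_L\Vert_{\modu_L}\le B_L$, taking the supremum over $f_j\in\fsp_j$, and substituting into the inequality of Proposition \ref{prop:rademacher_deep} yields
\[
\hat{R}_n(\bx,\mcl{F}^{\mr{deep}}_L p)\le \frac{1}{n}B_1\cdots B_L\Big(\sum_{i=1}^n\opn{tr}(k_1(x_i,x_i))\Big)^{1/2},
\]
which is exactly the statement.

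There is no real technical obstacle; the corollary is essentially bookkeeping on top of Proposition \ref{prop:rademacher_deep}. The only point that deserves a sentence of care is verifying that the composition $P_{f_{L-1}}\cdots P_{f_1}$ is well-defined as a bounded operator on (the relevant subspace of) $\tilde{\modu}_1$, so that submultiplicativity may be invoked; but this is precisely ensured by the well-definedness hypothesis imposed when $\mcl{F}^{\mr{deep}}_L$ was defined, so no additional work is required.
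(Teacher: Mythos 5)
Your proposal is correct and is exactly the intended derivation: the paper gives no separate proof of this corollary because it follows immediately from Proposition~\ref{prop:rademacher_deep} by bounding the restricted composite operator norm by $\prod_{j=1}^{L-1}\Vert P_{f_j}\Vert_{\opn{op}}\le B_1\cdots B_{L-1}$ (using the constraints in the definition of $\fsp_j$) and $\Vert f_L\Vert_{\modu_L}\le B_L$. Your added remarks on restriction not increasing the norm and on well-definedness of the composition are consistent with the standing assumptions in Section~\ref{sec:deep_rkhm}.
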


\paragraph{Comparison to deep vvRKHS}
We can also regard $\mathbb{C}^{d\times d}$ as the Hilbert space equipped with the Hilbert--Schmidt inner product, i.e., we can flatten matrices and get $d^2$-dimensional Hilbert space.
In this case, the corresponding operator-valued kernel is the multiplication operator of $k(x,y)$, which we denote by $M_{k(x,y)}$.
Then, we can apply existing results for vvRKHSs~\cite{laforgue19,sindwani13}, which involve the term $(\sum_{i=1}^n\opn{tr}(M_{k(x_i,x_i)}))^{1/2}$.
It is calculated as 
\begin{align*}
 \sum_{i=1}^n\opn{tr}(M_{k(x_i,x_i)})=\sum_{i=1}^n\sum_{j,l=1}^{d}\bracket{e_{jl},k(x_i,x_i)e_{jl}}_{\opn{HS}}=\sum_{i=1}^n\sum_{j,l=1}^dk(x_i,x_i)_{l,l}=d\sum_{i=1}^n\opn{tr}k(x_i,x_i).
\end{align*}
Thus, using the existing approaches, we have the factor $(d\sum_{i=1}^n\opn{tr}k(x_i,x_i))^{1/2}$.
On the other hand, we have the smaller factor $(\sum_{i=1}^n\opn{tr}k(x_i,x_i))^{1/2}$ in Theorems~\ref{thm:generalization_shallow} and \ref{thm:generalization_deep}.
Using the operator norm, we can reduce the dependency on the dimension $d$.

\section{Learning Deep RKHMs}
We focus on the practical learning problem.
To learn deep RKHMs, we consider the following minimization problem based on the generalization bound derived in Section~\ref{sec:generalization}:
\begin{align}
\min_{\red{(f_j\in\modu_j)_j}}\bigg\Vert \frac{1}{n}\sum_{i=1}^n\vert f_L\circ\cdots\circ f_1(x_i)-y_i\vert_{\alg}^2\bigg\Vert_{\alg}+\lambda_1\Vert P_{f_{L-1}}\cdots P_{f_{1}}\vert_{\tilde{\mcl{V}}(\bx)}\Vert_{\opn{op}}+\lambda_2\Vert f_L\Vert_{{\modu}_L}.\label{eq:minimization_prob}
\end{align}
\red{The second term regarding the Perron--Frobenius operators} comes from the bound in Proposition~\ref{prop:rademacher_deep}.
We try to reduce the generalization error by reducing the magnitude of the norm of the Perron--Frobenius operators.

\subsection{Representer theorem}
We first show a representer theorem to guarantee that a solution of the minimization problem~\eqref{eq:minimization_prob} is represented only with given samples.
\begin{proposition}\label{prop:representer}
Let $h:\alg^n\times\alg^n\to\mathbb{R}_+$ be an error function, let $g_1$ be an $\mathbb{R}_+$-valued function on the space of bounded linear operators on $\tilde{\modu}_1$, and let $g_2:\mathbb{R}_+\to\mathbb{R}_+$ satisfy $g_2(a)\le g_2(b)$ for $a\le b$.
Assume the following minimization problem has a solution: 
\begin{align*}
\min_{\red{(f_j\in\modu_j)_j}}h(f_L\circ\cdots\circ f_1(x_1),\ldots,f_L\circ\cdots\circ f_1(x_n))+g_1(P_{f_{L-1}}\cdots P_{f_L}\vert_{\tilde{\mcl{V}}(\bx)})+g_2(\Vert f_L\Vert_{\modu_L}).
\end{align*}
Then, there exists a solution admitting a representation of the form $f_j=\sum_{i=1}^n\phi_j(x_i^{j-1})c_{i,j}$ for some $c_{1,j},\ldots,c_{n,j}\in\alg$ and for $j=1,\ldots,L$.
Here, $x_i^j=f_j\circ \cdots \circ f_1(x_i)$ for $j=1,\ldots,L$ and $x_i^0=x_i$.
\end{proposition}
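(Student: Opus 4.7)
The plan is to run a layer-wise orthogonal decomposition argument of the type that is standard for shallow RKHM representer theorems, and to verify that none of the three terms in the objective increases when each $f_j^*$ is replaced by its projection onto the finitely generated submodule spanned by the propagated training features. Fix an optimizer $(f_1^*,\ldots,f_L^*)$ and set $x_i^{0,*}=x_i$, $x_i^{j,*}=f_j^*(x_i^{j-1,*})$ recursively. For each $j$, let $V_j\subset\modu_j$ denote the closed $\alg_j$-submodule generated by $\phi_j(x_1^{j-1,*}),\ldots,\phi_j(x_n^{j-1,*})$, and decompose $f_j^*=\tilde f_j+f_j^\perp$ with $\tilde f_j\in V_j$ and $f_j^\perp\perp V_j$ in $\modu_j$. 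Then $\tilde f_j=\sum_{i=1}^n\phi_j(x_i^{j-1,*})c_{i,j}$ for some $c_{i,j}\in\alg_j\subset\alg$, which is the representation demanded by the proposition.

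The key auxiliary claim, proved by induction on $j$, is that $\tilde f_j\circ\cdots\circ\tilde f_1(x_i)=x_i^{j,*}$ for every $i$, so in particular the inputs to the $j$-th layer are the same in the starred and tilded chains. For the base case, the reproducing property and $\phi_1(x_i)\in V_1\perp f_1^\perp$ give
\begin{equation*}
\tilde f_1(x_i)=\bracket{\phi_1(x_i),\tilde f_1}_{\modu_1}=\bracket{\phi_1(x_i),f_1^*}_{\modu_1}=f_1^*(x_i),
\end{equation*}
and the inductive step is identical applied at $x_i^{j-1,*}$. With the claim in hand, the first term $h(f_L\circ\cdots\circ f_1(x_1),\ldots)$ takes exactly the same value on the tilded chain. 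For the third term, the Pythagorean identity $\Vert\tilde f_L\Vert_{\modu_L}^2+\Vert f_L^\perp\Vert_{\modu_L}^2=\Vert f_L^*\Vert_{\modu_L}^2$ gives $\Vert\tilde f_L\Vert_{\modu_L}\le\Vert f_L^*\Vert_{\modu_L}$, and the monotonicity of $g_2$ finishes it. For the second term, I show the stronger statement that $P_{\tilde f_{L-1}}\cdots P_{\tilde f_1}$ and $P_{f_{L-1}^*}\cdots P_{f_1^*}$ agree as operators on $\tilde{\mcl{V}}(\bx)$; by $\alg$-linearity and density it suffices to check this on the generators $\phi_1(x_i)$, and iterating $P_f\phi(x)=\phi'(f(x))$ together with the invariance claim shows both sides equal $\phi_L(x_i^{L-1,*})$.

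The main obstacle is the very first step: existence of the orthogonal projection onto the finitely generated closed submodule $V_j$ is not automatic in a Hilbert $C^*$-module, because the submodule must be orthogonally complemented. I plan to handle this by invoking the hypotheses already in force in the paper (in particular the $\alg$-linear independence condition of Lemma~\ref{lem:lin_indep}, which ensures the relevant Gram matrices are invertible in $\alg_j$), so that the projection exists and $\tilde f_j$ lies in $V_j\subset\modu_j$ as required. Apart from this point, every step reduces to the reproducing property and a Pythagorean identity, so the argument is robust; the care needed is mostly in keeping track of which $C^*$-subalgebra $\alg_j$ hosts each layer and in confirming that the compatibility $\bracket{\cdot,\cdot}_{\tilde\modu_j}=\bracket{\cdot,\cdot}_{\modu_j}$ lets the reproducing identity used above be applied unambiguously.
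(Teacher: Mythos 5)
Your proposal follows essentially the same route as the paper: decompose each $f_j^*$ into its component in the submodule generated by $\phi_j(x_1^{j-1}),\ldots,\phi_j(x_n^{j-1})$ plus an orthogonal remainder, use the reproducing property to show the data-fit term is unchanged, check the equality of the restricted Perron--Frobenius products on the generators, and control $\Vert f_L\Vert_{\modu_L}$ by positivity. Your extra care about orthogonal complementability of the finitely generated submodule is well placed (the paper is silent on it, but the $\alg$-linear independence already assumed for well-definedness of the $P_{f_j}$, via the invertible Gram matrix, gives an explicit projection, and closedness is automatic since $\alg_j$ is finite-dimensional). The one step that is wrong as written is the ``Pythagorean identity'' $\Vert\tilde f_L\Vert_{\modu_L}^2+\Vert f_L^\perp\Vert_{\modu_L}^2=\Vert f_L^*\Vert_{\modu_L}^2$: in an RKHM the norm is $\Vert f\Vert_{\modu_L}=\Vert\bracket{f,f}_{\modu_L}\Vert_{\alg}^{1/2}$ with $\Vert\cdot\Vert_{\alg}$ the operator norm, and the operator norm of a sum of two positive matrices is not the sum of their norms (take two rank-one projections onto orthogonal directions). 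What does hold is the Pythagorean identity at the level of the $\alg$-valued inner product, $\bracket{f_L^*,f_L^*}_{\modu_L}=\bracket{\tilde f_L,\tilde f_L}_{\modu_L}+\bracket{f_L^\perp,f_L^\perp}_{\modu_L}$, and then $0\le\bracket{\tilde f_L,\tilde f_L}_{\modu_L}\le\bracket{f_L^*,f_L^*}_{\modu_L}$ gives $\Vert\tilde f_L\Vert_{\modu_L}\le\Vert f_L^*\Vert_{\modu_L}$ by monotonicity of the norm on positive elements; this is exactly the repair the paper makes (citing Murphy, Theorem 2.2.5(3)), and with it your argument is complete.
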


\begin{remark}\label{rem:product_structure}
An advantage of deep RKHM compared to deep vvRKHS is that we can make use of the structure of matrices. 
For example, the product of two diagonal matrices is calculated by the elementwise product of diagonal elements.
Thus, when $\alg_1=\cdots=\alg_L=Block((1,\ldots,1),d)$, interactions among elements in an input are induced only by the kernels, not by the product $k_j(x,x_i^{j-1})\cdot c_{i,j}$.
That is, the form of interactions does not depend on the learning parameter $c_{i,j}$.
On the other hand, if we set $\alg_j=Block(\mathbf{m}_j,d)$ with $\mathbf{m}_j=(m_{1,j},\ldots,m_{M_j,j})\neq (1,\ldots,1)$, then at the $j$th layer, we get interactions among elements in the same block through the product $k_j(x,x_i^{j-1})\cdot c_{i,j}$.
In this case, the form of interactions is learned through $c_{i,j}$.
For example, the part $M_1\le \cdots\le M_l$ (the encoder) in Example~\ref{ex:autoencoder} tries to gradually reduce the dependency among elements in the output of each layer and describe the input with a small number of variables.
The part $M_l\ge \cdots \ge M_L$ (the decoder) tries to increase the dependency.
\end{remark}

\subsection{Computing the norm of the Perron--Frobenius operator}\label{subsec:PE_norm}
We discuss the practical computation and the role of the factor $\Vert P_{f_{L-1}}\cdots P_{f_{1}}\vert_{\tilde{\mcl{V}}(\bx)}\Vert_{\opn{op}}$ in Eq.~\eqref{eq:minimization_prob}.
Let $G_j\in\alg^{n\times n}$ be the Gram matrix whose $(i,l)$-entry is $k_j(x_i^{j-1},x_l^{j-1})\in\alg$.
\begin{proposition}\label{prop:PF_compute}
For $j=1,L$, let $[\phi_j(x_1^{j-1}),\ldots,\phi_j(x_n^{j-1})]R_j=Q_j$ be the QR decomposition of $[\phi_j(x_1^{j-1}),\ldots,\phi_j(x_n^{j-1})]$.
Then, we have $\Vert P_{f_{L-1}}\cdots P_{f_{1}}\vert_{\tilde{\mcl{V}}(\bx)}\Vert_{\opn{op}}=\Vert R_L^*G_LR_1\Vert_{\opn{op}}$.
\end{proposition}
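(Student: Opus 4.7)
The plan is to turn $P:=P_{f_{L-1}}\cdots P_{f_1}$ restricted to $\tilde{\mcl{V}}(\bx)$ into matrix multiplication on $\alg^n$ by using the two QR decompositions to give isometric coordinates at layer $1$ and at layer $L$, then read off the representing matrix and clean it up with the orthonormality relation at layer $L$.

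First I would observe that the QR relation $[\phi_j(x_l^{j-1})]_l R_j = Q_j$ together with $Q_j^*Q_j=I$ produces isometric $\alg$-linear embeddings $\iota_j:\alg^n\to\tilde{\modu}_j$, $\mathbf{a}\mapsto Q_j\mathbf{a}$, since $\bracket{Q_j\mathbf{a},Q_j\mathbf{a}}=\mathbf{a}^*Q_j^*Q_j\mathbf{a}=\mathbf{a}^*\mathbf{a}$ in $\alg^n$. Because $\phi_1(x_l)=(Q_1R_1^{-1})_l$, every element $\sum_l\phi_1(x_l)b_l=Q_1(R_1^{-1}\mathbf{b})$ belongs to $\mathrm{Im}\,\iota_1$, so $\mathrm{Im}\,\iota_1=\tilde{\mcl{V}}(\bx)$. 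The invertibility of $R_1$ and $R_L$ needed here rests on the $\alg$-linear independence of $\{\phi_j(x_l^{j-1})\}_l$, which is already in force via the well-definedness assumption on the Perron--Frobenius operators (Lemmas~\ref{lem:well_defined}--\ref{lem:lin_indep}).

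Second I would compute $P$ in these coordinates. By $\alg$-linearity of each $P_{f_j}$ and the defining telescoping identity $P_{f_j}\phi_j(y)=\phi_{j+1}(f_j(y))$, iterated layer by layer, $P\phi_1(x_l)=\phi_L(x_l^{L-1})$. Hence for $v=\iota_1(\mathbf{a})=[\phi_1(x_l)]_lR_1\mathbf{a}$,
\begin{align*}
 Pv \;=\; [\phi_L(x_l^{L-1})]_l\,R_1\mathbf{a} \;=\; Q_LR_L^{-1}R_1\mathbf{a} \;=\; \iota_L(R_L^{-1}R_1\mathbf{a}),
\end{align*}
so $P\vert_{\tilde{\mcl{V}}(\bx)}=\iota_L\circ X\circ\iota_1^{-1}$ with $X:=R_L^{-1}R_1\in\alg^{n\times n}$. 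Since both $\iota_1$ and $\iota_L$ are isometric for the Hilbert $C^*$-module norm, $\Vert P\vert_{\tilde{\mcl{V}}(\bx)}\Vert_{\opn{op}}=\Vert X\Vert_{\opn{op}}$, where $X$ acts on $\alg^n$ by left multiplication.

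Finally I would eliminate the inverse $R_L^{-1}$. The orthonormality $I=Q_L^*Q_L=R_L^*[\phi_L(x_l^{L-1})]_l^*[\phi_L(x_l^{L-1})]_l R_L=R_L^*G_LR_L$ yields $R_L^*G_L=R_L^{-1}$, and substituting into $X=R_L^{-1}R_1$ gives $X=R_L^*G_LR_1$, which is the claim. The main obstacle is the isometric identification itself: one must know that $R_L$ is genuinely invertible in $\alg^{n\times n}$ (equivalently that the Gram matrix $G_L$ is a strictly positive element of $\alg^{n\times n}$, needed for the QR step to even produce $Q_L$ with $Q_L^*Q_L=I$) and that $\iota_1,\iota_L$ preserve the module norm, not merely the $\alg$-valued inner product; once the $\alg$-linear independence hypothesis is invoked, everything else is short QR bookkeeping.
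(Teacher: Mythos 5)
Your proposal is correct and follows essentially the same route as the paper: both use the QR decompositions to realize $Q_1$ and $Q_L$ as isometric coordinatizations of $\tilde{\mcl{V}}(\bx)$ and of the span of $\phi_L(x_1^{L-1}),\ldots,\phi_L(x_n^{L-1})$, and then read off the representing matrix of the restricted Perron--Frobenius operator. The paper writes that matrix directly as $Q_L^*P_{f_{L-1}}\cdots P_{f_1}Q_1=R_L^*G_LR_1$, whereas you first obtain $R_L^{-1}R_1$ and then simplify via $R_L^*G_LR_L=I$; these are the same computation in a slightly different order.
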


The computational cost of $\Vert R_L^*G_LR_1\Vert_{\opn{op}}$ is expensive if $n$ is large since computing $R_1$ and $R_L$ is expensive.
Thus, we consider upper bounding $\Vert R_L^*G_LR_1\Vert_{\opn{op}}$ by a computationally efficient value.
\begin{proposition}\label{prop:PF_bound}
Assume $G_L$ is invertible. Then, we have
\begin{align}
\Vert  P_{f_{L-1}}\cdots P_{f_{1}}\vert_{\tilde{\mcl{V}}(\bx)}\Vert_{\opn{op}}\le \Vert G_L^{-1}\Vert^{1/2}_{\opn{op}}\Vert G_L\Vert_{\opn{op}}\Vert G_1^{-1}\Vert_{\opn{op}}^{1/2}.\label{eq:pf_norm}
\end{align}
\end{proposition}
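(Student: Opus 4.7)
The plan is to start from Proposition~\ref{prop:PF_compute}, which already identifies the operator norm on the left-hand side with $\Vert R_L^*G_LR_1\Vert_{\opn{op}}$, and then reduce the task to bounding $\Vert R_j\Vert_{\opn{op}}$ in terms of $\Vert G_j^{-1}\Vert_{\opn{op}}^{1/2}$ for $j=1,L$. The first step is an immediate application of submultiplicativity of the operator norm (which holds because $\alg$ is a $C^*$-algebra, so the norm on $\alg^{n\times n}$ inherits the submultiplicative property):
\begin{equation*}
\Vert R_L^*G_LR_1\Vert_{\opn{op}}\le \Vert R_L^*\Vert_{\opn{op}}\,\Vert G_L\Vert_{\opn{op}}\,\Vert R_1\Vert_{\opn{op}}.
\end{equation*}
After this, the remaining work is purely to identify each factor $\Vert R_j\Vert_{\opn{op}}$.

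The key step is to exploit the defining property of the module-valued QR decomposition. By construction, $Q_j=[\phi_j(x_1^{j-1}),\ldots,\phi_j(x_n^{j-1})]R_j$ has ``orthonormal'' columns in the RKHM sense, i.e.\ the Gram matrix of $Q_j$ is the identity of $\alg^{n\times n}$. Equivalently, $R_j^*G_jR_j=I$, where the Gram-matrix identity $G_j=[\phi_j(x_i^{j-1})]^*[\phi_j(x_l^{j-1})]$ has been used. Rearranging this gives $R_jR_j^*=G_j^{-1}$ (this is the point where invertibility of $G_j$ is needed; for $j=1$ the argument implicitly assumes $G_1$ is invertible, which is consistent with the $\alg$-linear independence discussed around Lemma~\ref{lem:lin_indep}). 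Then the $C^*$-identity yields
\begin{equation*}
\Vert R_j\Vert_{\opn{op}}^2=\Vert R_j^*\Vert_{\opn{op}}^2=\Vert R_jR_j^*\Vert_{\opn{op}}=\Vert G_j^{-1}\Vert_{\opn{op}},
\end{equation*}
so $\Vert R_j\Vert_{\opn{op}}=\Vert G_j^{-1}\Vert_{\opn{op}}^{1/2}$ for $j=1,L$.

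Substituting these two identities into the submultiplicative estimate immediately produces
\begin{equation*}
\Vert R_L^*G_LR_1\Vert_{\opn{op}}\le \Vert G_L^{-1}\Vert_{\opn{op}}^{1/2}\,\Vert G_L\Vert_{\opn{op}}\,\Vert G_1^{-1}\Vert_{\opn{op}}^{1/2},
\end{equation*}
which, combined with Proposition~\ref{prop:PF_compute}, gives the desired bound~\eqref{eq:pf_norm}.

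I expect the main obstacle to be the bookkeeping around the QR decomposition in the $C^*$-module setting: one must verify that ``$Q_j$ has orthonormal columns'' really does mean $R_j^*G_jR_j=I_{\alg^{n\times n}}$, and that $R_j$ is invertible with $R_j^{-1}R_j^{-*}=G_j$ so that $R_jR_j^*=G_j^{-1}$. Once this module-theoretic QR identity is in hand, the rest of the argument is just the $C^*$-identity and submultiplicativity, both of which are standard. The only subtlety worth flagging in the final write-up is the implicit invertibility of $G_1$, which should either be stated explicitly or inferred from $\alg$-linear independence of $\{\phi_1(x_i)\}_{i=1}^n$.
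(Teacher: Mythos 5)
Your proposal is correct and follows essentially the same route as the paper: reduce to $\Vert R_L^*G_LR_1\Vert_{\opn{op}}$ via Proposition~\ref{prop:PF_compute}, apply submultiplicativity, and use the identity $\Vert R_j\Vert_{\opn{op}}^2=\Vert R_jR_j^*\Vert_{\opn{op}}=\Vert G_j^{-1}\Vert_{\opn{op}}$. Your additional justification of that identity from $R_j^*G_jR_j=I$, and your flagging of the implicit invertibility of $G_1$, are reasonable elaborations of what the paper leaves tacit.
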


Since $G_1$ is independent of $f_1,\ldots,f_L$, to make the value $\Vert P_{f_{L-1}}\cdots P_{f_{1}}\vert_{\tilde{\mcl{V}}(\bx)}\Vert_{\opn{op}}$ small, we try to make the norm of $G_L$ and $G_L^{-1}$ small according to Proposition~\ref{prop:PF_bound}.
For example, instead of the \red{second term regarding the Perron--Frobenius operators} in Eq.~\eqref{eq:minimization_prob}, we can consider the following term with $\eta>0$:
\begin{align*}
\lambda_1(\Vert (\eta I+G_L)^{-1}\Vert_{\opn{op}}+\Vert G_L\Vert_{\opn{op}}).
\end{align*}

\red{Note that the term depends on the training samples $x_1,\ldots,x_n$.
The situation is different from the third term in Eq.~\eqref{eq:minimization_prob} since the third term does not depend on the training samples before applying the representer theorem.
If we try to minimize a value depending on the training samples, the model seems to be more specific for the training samples, and it may cause overfitting. 
Thus, the connection between the minimization of the second term in Eq.~\eqref{eq:minimization_prob} and generalization cannot be explained by the classical argument about generalization and regularization.}
However, as we will see in Subsection~\ref{subsec:benign}, it is related to benign overfitting and has a good effect on generalization.

\begin{remark}\label{rem:benign}
The inequality~\eqref{eq:pf_norm} implies that as $\{\phi_L(x_1^{L-1}),\ldots,\phi_L(x_n^{L-1})\}$  becomes nearly linearly dependent, the Rademacher complexity becomes large.
By \red{the term $\Vert (\eta I+G_L)^{-1}\Vert$ in Eq.~\eqref{eq:minimization_prob}}, the function $f_{L-1}\circ\cdots\circ f_1$ is learned so that it separates $x_1,\ldots, x_n$ well.
\end{remark}

\begin{remark}\label{rem:computational_costs}
To evaluate $f_1\circ\cdots \circ f_L(x_i)$ for $i=1,\ldots,n$, we have to construct a Gram matrix $G_j\in\alg_j^{n\times n}$, and compute the product of the Gram matrix and a vector $c_j\in\alg_j^n$ for $j=1,\ldots,L$.
The computational cost of the construction of the Gram matrices does not depend on $d$.
The cost of computing $G_jc_j$ is $O(n^2\tilde{d}_j)$, where $\tilde{d}_j$ is the number of nonzero elements in the matrices in $\alg_j$.
Note that if $\alg_j$ is the $C^*$-algebra of block diagonal matrices, then we have $\tilde{d}_j\ll d^2$.
Regarding the cost with respect to $n$, if the positive definite kernel is separable, we can use the random Fourier features~\cite{rahimi07} to replace the factor $n^2$ with $mn$ for a small integer $m\ll n$. 
\end{remark}

\section{Connection and Comparison with Existing Studies}
The proposed deep RKHM is deeply related to existing studies by virtue of $C^*$-algebra and the Perron--Frobanius operators.
We discuss the connection below.
\subsection{Connection with CNN}\label{subsec:cnn}
The proposed deep RKHM has a duality with CNNs.
Let $\alg_0=\cdots=\alg_L=Circ(d)$, the $C^*$-algebra of $d$ by $d$ circulant matrices.
For $j=1,\ldots,L$, let $a_{j}\in\alg_j$.
Let $k_j$ be an $\alg_j$-valued positive definite kernel defined as $k_j(x,y)=\tilde{k}_{j}(a_jx,a_jy)$, where $\tilde{k}_{j}$ is an $\alg_j$-valued function.
\red{The $\alg_j$-valued positive definite kernel makes the output of each layer become a circulant matrix, which enables us to apply the convolution as the product of the output and a parameter.}
Then, $f_j$ is represented as $f_j(x)=\sum_{i=1}^n\tilde{k}_j(a_jx,a_jx_i^{j-1})c_{i,j}$ for some $c_{i,j}\in\alg_j$.
Thus, at the $j$th layer, the input $x$ is multiplied by $a_j$ and is transformed nonlinearly by $\sigma_{j}(x)=\sum_{i=1}^nk_j(x,a_jx_i^{j-1})c_{i,j}$.
Since the product of two circulant matrices corresponds to the convolution, $a_{j}$ corresponds to a filter.
In addition, $\sigma_{j}$ corresponds to the activation function at the $j$th layer.
Thus, the deep RKHM with the above setting corresponds to a CNN.
The difference between the deep RKHM and the CNN is the parameters that we learn.
Whereas for the deep RKHM, we learn the coefficients $c_{1,j},\ldots,c_{n,j}$, for the CNN, we learn the parameter $a_j$.
In other words, whereas for the deep RKHM, we learn the activation function $\sigma_j$, for the CNN, we learn the filter $a_j$.
\red{It seems reasonable to interpret this difference as a consequence of solving the problem in the primal or in the dual. 
In the primal, the number of the learning parameters depends on the dimension of the data (or the filter for convolution), while in the dual, it depends on the size of the data.}

\begin{remark}
The connection between CNNs and shallow RKHMs has already been studied~\cite{hashimoto23_aistats}.
However, the existing study does not provide the connection between the filter and activation function.
The above investigation shows a more clear layer-wise connection of deep RKHMs with CNNs.
\end{remark}

\subsection{Connection with benign overfitting}\label{subsec:benign}
Benign overfitting is a phenomenon that the model fits
any amount of data yet generalizes well~\cite{belkin19,bartlett20}.
For kernel regression, Mallinar et al.~\cite{mallinar22} showed that if the eigenvalues of the integral operator associated with the kernel function over the data distribution decay slower than any powerlaw decay, than the model exhibits benign overfitting.
The Gram matrix is obtained by replacing the integral with the sum over the finite samples.
The inequality~\eqref{eq:pf_norm} suggests that the generalization error becomes smaller as the smallest and the largest eigenvalues of the Gram matrix get closer,
which means the eigenvalue decay is slower.
Combining the observation in Remark~\ref{rem:benign}, we can interpret that as the right-hand side of the inequality~\eqref{eq:pf_norm} becomes smaller, the outputs of noisy training data at the $L-1$th layer tend to be more separated from the other outputs.
In other words, for the random variable $x$ following the data distribution, $f_{L-1}\circ \cdots \circ f_1$ is learned so that the distribution of $f_{L-1}\circ \cdots \circ f_1(x)$ generates the integral operator with more separated eigenvalues,
which appreciates benign overfitting.
We will also observe this phenomenon numerically in Section~\ref{sec:numexp} and Appendix~\ref{ap:benign_mnist}.
Since the generalization bound for deep vvRKHSs~\cite{laforgue19} is described by the Lipschitz constants of the feature maps and the norm of $f_j$ for $j=1,\ldots,L$, this type of theoretical interpretation regarding benign overfitting is not available for the existing bound for deep vvRKHSs.

\begin{remark}
\red{The above arguments about benign overfitting are valid only for deep RKHMs, i.e., the case of $L\ge 2$. 
If $L=1$ (shallow RKHM), then the Gram matrix $G_L=[k(x_i,x_j)]_{i,j}$ is fixed and determined only by the training data and the kernel. 
On the other hand, if $L\ge 2$ (deep RKHM), then $G_L=[k_L(f_{L-1} \circ \cdots \circ f_1(x_i),f_{L-1} \circ \cdots \circ f_1(x_j))]_{i,j}$ depends also on $f_{1},\ldots,f_{L-1}$. 
As a result, by adding the term using $G_L$ to the loss function, we can learn proper $f_1,\ldots,f_{L-1}$ so that they make the right-hand side of Eq.~\eqref{eq:pf_norm} small, and the whole network overfits benignly. 
As $L$ becomes large, the function $f_{L-1}\circ\cdots\circ f_1$ changes more flexibly to attain a smaller value of the term.
This is an advantage of considering a large $L$.}    
\end{remark}

\subsection{Connection with neural tangent kernel}\label{subsec:NTK}
Neural tangent kernel has been investigated to understand neural networks using the theory of kernel methods~\cite{jacot18,chen21}.
Generalizing neural networks to $C^*$-algebra, which is called the $C^*$-algebra network, is also investigated~\cite{hataya23,hashimoto22}.
\red{We define a neural tangent kernel for the $C^*$-algebra network and develop a theory for combining neural networks and deep RKHMs as an analogy of the existing studies.}
Consider the $C^*$-algebra network $f:\alg^{N_0}\to\alg$ over $\alg$ with $\alg$-valued weight matrices $W_j\in\alg^{N_j\times N_{j-1}}$ and element-wise activation functions $\sigma_j$:
$f(x)=W_L\sigma_{L-1}(W_{L-1}\cdots \sigma_1(W_1x)\cdots)$.
The $(i,j)$-entry of $f(x)$ is 
$f_{i}(\bx_j)=\mathbf{W}_{L,i}\sigma_{L-1}(W_{L-1}\cdots \sigma_1(W_1\bx_j)\cdots)$, where $\bx_j$ is the $j$th column of $x$ regarded as $x\in\mathbb{C}^{dN_0\times d}$ and $\mathbf{W}_{L,i}$ is the $i$th row of $W_L$ regarded as $W_L\in\mathbb{C}^{d\times dN_{L-1}}$.
Thus, the $(i,j)$-entry of $f(x)$ corresponds to the output of the network $f_{i}(\bx_j)$.
We can consider the neural tangent kernel for each $f_i$.
Chen and Xu~\cite{chen21} showed that the RKHS associated with the neural tangent kernel restricted to the sphere is the same set as that associated with the Laplacian kernel.
Therefore, the $i$th row of $f(x)$ is described by the shallow RKHS associated with the neural tangent kernel $k_i^{\opn{NT}}$.
Let $k^{\opn{NN}}$ be the $\alg$-valued positive definite kernel whose $(i,j)$-entry is $k_i^{\opn{NT}}$ for $i=j$ and $0$ for $i\neq j$.
Then, for any function $g\in\modu_{k^{\opn{NN}}}$, the elements in the $i$th row of $g$ are in the RKHS associated with $k_i^{\opn{NT}}$, i.e., associated with the Laplacian kernel.
Thus, $f$ is described by this shallow RKHM. 

\begin{remark}
We can combine the deep RKHM and existing neural networks by replacing some $f_j$ in our model with an existing neural network.
The above observation enables us to apply our results in Section~\ref{sec:generalization} to the combined network.
\end{remark}


\subsection{Comparison to bounds for classical neural networks}
Existing bounds for classical neural networks typically depend on the product or sum of matrix $(p,q)$ norm of all the weight matrices $W_j$~\cite{neyshabur15,golowich18,bartlett17,ju22}.
Typical bound is $O(\sqrt{1/n}\prod_{j=1}^L\Vert W_j\Vert_{\opn{HS}})$.
Note that the Hilbert--Schmidt norm is the matrix $(2,2)$ norm.
Unlike the operator norm, the matrix $(p,q)$ norm tends to be large as the width of the layers becomes large.
On the other hand, the dependency of our bound on the width of the layer is not affected by the number of layers in the case where the kernels are separable.
Indeed, assume we set $k_1=\tilde{k}_1a_1$ and $k_L=\tilde{k}_La_L$ for some complex-valued kernels $\tilde{k}_1$ and $\tilde{k}_2$, $a_1\in\alg_1$, and $a_L\in\alg_L$.
Then by Proposition~\ref{prop:PF_compute}, the factor $\alpha(L):=\Vert  P_{f_{L-1}}\cdots P_{f_{1}}\vert_{\tilde{\mcl{V}}(\bx)}\Vert_{\opn{op}}$ is written as $\Vert \tilde{R}_L^*\tilde{G}_L\tilde{R}_1\otimes a_L^2a_1\Vert_{\opn{op}}=\Vert \tilde{R}_L^*\tilde{G}_L\tilde{R}_1\Vert_{\opn{op}}\;\Vert a_L^2a_1\Vert_{\alg}$ for some $\tilde{R}_L,\tilde{G}_L,\tilde{R}_1\in\mathbb{C}^{n\times n}$.
Thus, it is independent of $d$.
The only part depending on $d$ is $\opn{tr}k_1(x_i,x_i)$, which results in the bound $O(\alpha(L)\sqrt{d/n})$.
Note that the width of the $j$th layer corresponds to the number of nonzero elements in a matrix in $\alg_j$. 
We also discuss in Appendix~\ref{ap:koopman_bound} the connection of our bound with the bound by Koopman operators, the adjoints of the Perron--Frobenius operators~\cite{hashimoto23}.

\section{Numerical Results}\label{sec:numexp}
We numerically confirm our theory and the validity of the proposed deep RKHM.
\paragraph{Comparison to vvRKHS}
We compared the generalization property of the deep RKHM to the deep vvRKHS with the same positive definite kernel.
For $d=10$ and $n=10$, we set
$x_i=(az_i)^2+\epsilon_i$ as input samples, where $a\in\mathbb{R}^{100\times 10}$ and $z_i\in\mathbb{R}^{10}$ are randomly generated by $\mcl{N}(0,0.1)$, the normal distribution of mean 0 and standard deviation 0.1, $(\cdot)^2$ is the elementwise product, and $\epsilon_i$ is the random noise drawn from $\mcl{N}(0,1e-3)$.
We reshaped $x_i$ to a $10$ by $10$ matrix.
We set $L=3$ and $k_j=\tilde{k}I$ for $j=1,2,3$, where $\tilde{k}$ is the Laplacian kernel. 
For RKHMs, we set $\alg_1=Block((1,\ldots,1),d)$, $\alg_2=Block((2,\ldots,2),d)$, and $\alg_3=\mathbb{C}^{d\times d}$.
This is the autoencoder mentioned in Example~\ref{ex:autoencoder}.
For vvRKHSs, we set the corresponding Hilbert spaces with the Hilbert--Schmidt inner product.
We set the loss function as $1/n\Vert\sum_{i=1}^n\vert f(x_i)-x_i\vert_{\alg}^2\Vert_{\alg}$ for the deep RKHM and
as $1/n\sum_{i=1}^n\Vert f(x_i)-x_i\Vert_{\opn{HS}}^2$ for the deep vvRKHS.
Here, $f=f_3\circ f_2\circ f_1$.
We did not \red{add any terms to the loss function} to see how the loss function with the operator norm affects the generalization performance.
We computed the same value $\Vert\mr{E}[\vert f(x)-x\vert_{\alg}^2]\Vert_{\alg}-1/n\Vert\sum_{i=1}^n\vert f(x_i)-x_i\vert_{\alg}^2\Vert_{\alg}$ for both RKHM and vvRKHS.
Figure~\ref{fig:vvrkhs} (a) shows the results.
We can see that the deep RKHM generalizes better than the deep vvRKHS, \red{only with the loss function and without any additional terms}. 

\paragraph{Observation about benign overfitting}
We 
analyzed the overfitting numerically.
For $d=10$ and $n=1000$, we randomly sampled $d$ by $d$ diagonal matrices $x_1,\ldots,x_n\in\alg_0$ from the normal distribution $\mcl{N}(0,0.1)$.
We set $y_i=x_i^2+\epsilon_i$ for $i=1,\ldots,n$, where $\epsilon_i$ is a noise drawn from the normal distribution $\mcl{N}(0,0.001)$.
The magnitude of the noise is $10$\% of $x_i^2$.
In addition, we set $L=2$, $\alg_1=\mathbb{C}^{d\times d}$, $\alg_2=Block((1,\ldots,1),d)$, and $k_j$ as the same kernel as the above experiment.
The \red{additional term to the loss function} is set as $\lambda_1(\Vert (\eta I+G_L)^{-1}\Vert_{\opn{op}}+\Vert G_L\Vert_{\opn{op}})+\lambda_2\Vert f_L\Vert_{\modu_L}^2$, where $\eta=0.01$ and $\lambda_2=0.01$ according to Subsection~\ref{subsec:PE_norm}.
We computed the generalization error for the cases of $\lambda_1=0$ and $\lambda_1=10^2$.
Figure~\ref{fig:vvrkhs} (b) shows the result.
We can see that the generalization error saturates without \red{the additional term} motivated by the Perron--Frobenius operator.
On the other hand, with the \red{additional term}, the generalization error becomes small, which is the effect of benign overfitting.

\paragraph{Comparison to CNN}
We compared the deep RKHM to a CNN on the classification task with MNIST~\cite{lecun98}.
We set $d=28$ and $n=20$.
We constructed a deep RKHM combined with a neural network with 2 dense layers.
For the deep RKHM, we set $L=2$, $\alg_0=\mathbb{C}^{d\times d}$, $\alg_1=Block((7,7,7,7),d)$, and $\alg_2=Block((4,\ldots,4),d)$.
Then, two dense layers 
are added.
See Subsection~\ref{subsec:NTK} about combining the deep RKHM with neural networks.
Regarding the \red{additional term to the loss function}, we set the same term with the previous experiment with $\lambda_2=0.001$ and set $\lambda_1=1$ or $\lambda_1=0$.
To compare the deep RKHM to CNNs, we also constructed a network by replacing the deep RKHM with a CNN.
The CNN is composed of 2 layers with $7\times 7$ and $4\times 4$ filters.
Figure~\ref{fig:vvrkhs} (c) shows the test accuracy of these networks.
We can see that the deep RKHM outperforms the CNN.
In addition, we can see that the test accuracy is higher if we add \red{the term regarding the Perron--Frobenius operators}.
\red{We discuss the memory consumption and the computational cost of each of the deep RKHM and the CNN in Appendix~\ref{ap:numexp_cnn}, and we empirically show that the deep RKHM outperforms the CNN that has the same size of learning parameters as the deep RKHM.
We also show additional results about benign overfitting in Appendix~\ref{ap:numexp_cnn}.}

\begin{figure}[t]
\subfigure[]{\includegraphics[scale=0.23]{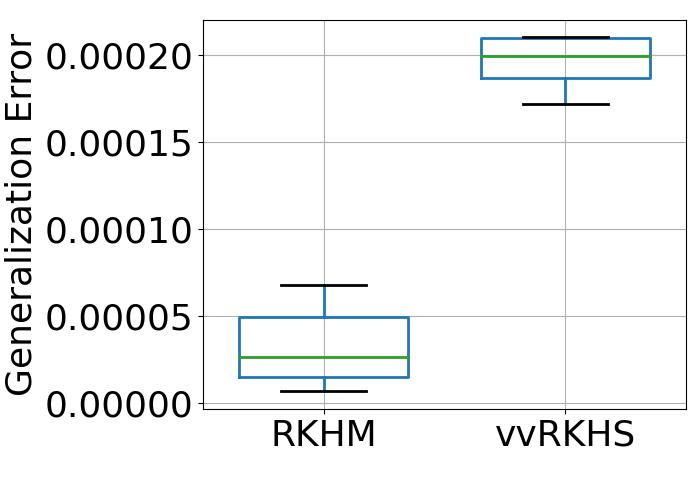}}
\subfigure[]{\includegraphics[scale=0.23]{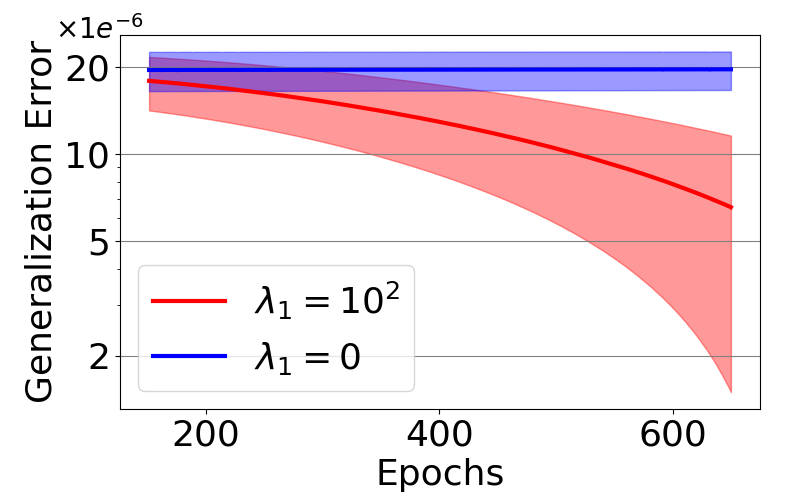}}
\subfigure[]{\includegraphics[scale=0.23]{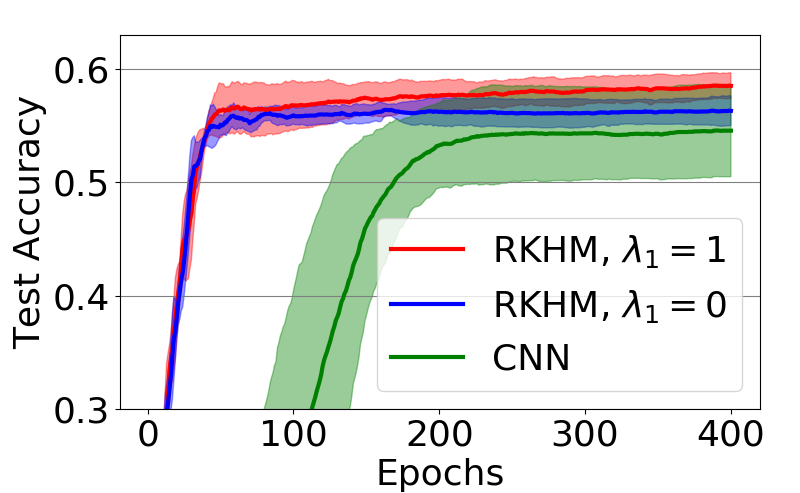}}
\if0
\begin{minipage}{0.03\textwidth}
(a)
\end{minipage}%
\begin{minipage}{0.3\textwidth}
\includegraphics[scale=0.21]{vvrkhs.png} 
\end{minipage}%
\begin{minipage}{0.03\textwidth}
(b)
\end{minipage}%
\begin{minipage}{0.3\textwidth}
\includegraphics[scale=0.21]{benign.png} 
\end{minipage}%
\begin{minipage}{0.03\textwidth}
(c)
\end{minipage}%
\begin{minipage}{0.3\textwidth}
\includegraphics[scale=0.21]{cnn.png}
\end{minipage}%
\fi
\caption{(a) The box plot of the generalization error of the deep RKHM and vvRKHS at the point that the training error reaches 0.05. (b) Behavior of the generalization error during the learning process with and without the \red{additional term regarding the Perron--Frobenius operators}. (c) Test accuracy of the classification task with MNIST for a deep RKHM and a CNN.}\label{fig:vvrkhs}
\end{figure}


\section{Conclusion and Limitations}
In this paper, we proposed deep RKHM and analyzed it through $C^*$-algebra and the Perron--Frobenius operators.
We derived a generalization bound, whose dependency on the output dimension is alleviated by the operator norm, and which is related to benign overfitting.
We showed a representer theorem about the proposed deep RKHM, and connections with existing studies such as CNNs and neural tangent kernel.
Our theoretical analysis shows that $C^*$-algebra and Perron--Frobenius operators are effective tools for analyzing deep kernel methods.
The main contributions of this paper are our theoretical results with $C^*$-algebra and the Perron--Frobenius operators.
More practical investigations are required for further progress.
For example, although we numerically showed the validity of our method for the case where the number of samples is limited (the last experiment in Section~\ref{sec:numexp}), more experimental results for the case where the number of samples is large are useful for further analysis.
Also, although we can apply random Fourier features (Remark~\ref{rem:computational_costs}) to reduce the computational costs, studying more efficient methods specific to deep RKHM remains to be investigated in future work.
As for the theoretical topic, we assumed the well-definedness of the Perron--Frobenius operators. 
Though separable kernels with invertible matrices, which are typical examples of kernels, satisfy the assumption, generalization of our analysis to other kernels should also be studied in future work.

\section*{Acknowledgements}
Hachem Kadri is partially supported by grant ANR-19-CE23-0011 from the French National Research Agency.
Masahiro Ikeda is partially supported by grant JPMJCR1913 from JST CREST.


\clearpage
\appendix
\renewcommand{\thesection}{\Alph{section}}
\setcounter{section}{0}
\setcounter{equation}{2}

\section{Proofs}
We provide the proofs of the theorems, propositions, and lemmas in the main paper.

\begin{mythm}[Lemma~\ref{lem:well_defined}]
If $\{\phi_1(x)\,\mid\,x\in\mcl{X}\}$ is $\alg$-linearly independent, then $P_f$ is well-defined.
\end{mythm}
\begin{proof}
Assume $\sum_{i=1}^n\phi_1(x_i)c_i=\sum_{i=1}^n\phi_1(x_i)d_i$ for $n\in\mathbb{N}$, $c_i,d_i\in\alg$.
Since $\{\phi_1(x)\,\mid\,x\in\mcl{X}\}$ is $\alg$-linearly independent, we have $c_i=d_i$ for $i=1,\ldots,n$.
Thus, $P_f\sum_{i=1}^n\phi_1(x_i)c_i=\sum_{i=1}^n\phi_1(f(x_i))c_i=\sum_{i=1}^n\phi_1(f(x_i))d_i=P_f\sum_{i=1}^n\phi_1(x_i)d_i$.
\end{proof}

\begin{mythm}[Lemma~\ref{lem:lin_indep}]
Let $k_1=\tilde{k}a$, i.e., $k$ is separable, for an invertible operator $a$ and a complex-valued kernel $\tilde{k}$.
Assume $\{\tilde{\phi}(x)\,\mid\,x\in\mcl{X}\}$ is linearly independent (e.g. $\tilde{k}$ is Gaussian or Laplacian), where $\tilde{\phi}$ is the feature map associated with $\tilde{k}$.
Then, $\{\phi_1(x)\,\mid\,x\in\mcl{X}\}$ is $\alg$-linearly independent.
\end{mythm}
\begin{proof}
If $\sum_{i=1}^n\phi_1(x_i)c_i=0$, we have
\begin{align*}
 0=\bracket{\sum_{i=1}^n\phi_1(x_i)c_i,\sum_{i=1}^n\phi_1(x_i)c_i}_{\modu_k}\!\!=c^*Gc=(G^{1/2}c)^*(G^{1/2}c),
\end{align*}
where $G$ is the $\alg^{n\times n}$-valued Gram matrix whose $(i,j)$-entry is $k(x_i,x_j)\in\alg$ and $c=(c_1,\ldots,c_n)$.
Thus, we obtain $G^{1/2}c=0$.
Let $\tilde{G}$ be the standard Gram matrix whose $(i,j)$-entry is $\tilde{k}(x_i,x_j)$.
Since $G=\tilde{G}\otimes a$ and $\tilde{G}$ is invertible, the inverse of $G$ is $\tilde{G}^{-1}\otimes a^{-1}$.
Thus, $G^{1/2}$ is invertible and we have $c=0$.
\end{proof}

\begin{mythm}[Lemma~\ref{lem:generalization_shallow}]
Let $\mcl{F}$ be a function class of $\algr$-valued functions on $\alg_0$ bounded by $C$ (i.e., $\Vert f(x)\Vert_{\alg}\le C$ for any $x\in\alg_0$).
Let $\tilde{\mcl{G}}(\mcl{F},p)=\{(x,y)\mapsto\Vert (f(x)-y)p\Vert^2\,\mid\,f\in\mcl{F},\Vert y\Vert_{\alg}\le E\}$ and $M=2(C+E)^2$.
Let $p\in\mathbb{R}^d$ satisfy $\Vert p\Vert=1$ and let $\delta\in (0,1)$.
Then, for any $g\in\tilde{\mcl{G}}(\mcl{F},p)$, with probability at least $1-\delta$, we have
\begin{align*}
\Vert\mr{E}[\vert g(x,y)\vert^2_{\alg}]^{1/2}p\Vert^2\le \bigg\Vert \frac{1}{n}\sum_{i=1}^n\vert g(x_i,y_i)\vert^2_{\alg}\bigg\Vert_{\alg}+2\hat{R}_n(\mathbf{x},\tilde{\mcl{G}}(\mcl{F},p))+3M\sqrt{\frac{\log(2/\delta)}{2n}}.
\end{align*}
\end{mythm}
\begin{proof}
For $(x,y),(x',y')\in\alg\times\algr$, we have
\begin{align*}
&\Vert (f(x)-y)p\Vert^2-\Vert (f(x')-y')p\Vert^2\\
&\qquad=\Vert f(x)p\Vert^2-2\bracket{f(x)p,yp}+\Vert yp\Vert^2-\Vert f(x')p\Vert^2+2\bracket{f(x')p,y'p}-\Vert y'p\Vert^2
\le 2(C+E)^2.
\end{align*}
In addition, we have
\begin{align*}
&\Vert\mr{E}[\vert g(x,y)\vert^2_{\alg}]^{1/2}p\Vert^2-\bigg\Vert \bigg(\frac{1}{n}\sum_{i=1}^n\vert g(x_i,y_i)\vert^2_{\alg}\bigg)^{1/2}p\bigg\Vert^2
= \bracket{p,\mr{E}[\vert g(x,y)\vert_{\alg}^2]p}-\Bbracket{p,\frac{1}{n}\sum_{i=1}^n\vert g(x_i,y_i)\vert^2_{\alg}p}\\
&\qquad= \mr{E}[\bracket{p,\vert g(x,y)\vert_{\alg}^2p}]-\frac{1}{n}\sum_{i=1}^n\bracket{p,\vert g(x_i,y_i)\vert_{\alg}^2p}
= \mr{E}[\Vert g(x,y)p\Vert^2]-\frac{1}{n}\sum_{i=1}^n\Vert g(x_i,y_i)p\Vert^2.
\end{align*}
Since $(x,y)\mapsto\Vert g(x,y)p\Vert^2$ is a real-valued map, by Theorem 3.3 by Mohri et al.~\cite{mohri18}, we have
\begin{align*}
\Vert\mr{E}[\vert g(x,y)\vert_{\alg}^2]^{1/2}p\Vert^2-\bigg\Vert \bigg(\frac{1}{n}\sum_{i=1}^n\vert g(x_i,y_i)\vert_{\alg}^2\bigg)^{1/2}p\bigg\Vert^2
\le 2\hat{R}_n(\mathbf{x},\tilde{\mcl{G}}(\mcl{F},p))+3M\sqrt{\frac{\log(2/\delta)}{2n}}.
\end{align*}
The inequality $\Vert(\sum_{i=1}^n\vert g(x_i,y_i)\vert_{\alg}^2)^{1/2}p\Vert^2\le \Vert\sum_{i=1}^n\vert g(x_i,y_i)\vert_{\alg}^2\Vert_{\alg}$ completes the proof.
\end{proof}

\begin{mythm}[Lemma~\ref{lem:maurer}]
With the same notations in Lemma~\ref{lem:generalization_shallow}, let $K=2\sqrt{2}(C+E)$.
Then, we have
$\hat{R}_n(\mathbf{x},\tilde{\mcl{G}}(\mcl{F},p))\le K\hat{R}_n(\mathbf{x},\mcl{F}p)$,
where $\mcl{F}p=\{x\mapsto f(x)p\,\mid\,f\in\mcl{F}\}$.
\end{mythm}
\begin{proof}
Let $h_i(z)=\Vert z-y_ip\Vert^2$ for $z\in\{f(x)p\,\mid\,f\in\mcl{F}\}\subseteq \mathbb{R}^d$ and $i=1,\ldots,n$.
The Lipschitz constant of $h_i$ is calculated as follows:
We have
\begin{align*}
h_i(z)-h_i(z')&= \Vert z\Vert^2-2\bracket{z-z',y_ip}-\Vert z'\Vert^2\\
&\le (\Vert z\Vert+\Vert z'\Vert)(\Vert z\Vert-\Vert z'\Vert)+2\Vert z-z'\Vert\Vert y_ip\Vert\\
&\le (\Vert z\Vert+\Vert z'\Vert)(\Vert z-z'\Vert+\Vert z'\Vert-\Vert z'\Vert)+2\Vert z-z'\Vert\Vert y_ip\Vert\\
&\le (2C+2E)\Vert z-z'\Vert.
\end{align*}
Thus, we have $\vert h_i(z)-h_i(z')\vert\le 2(C+E)\Vert z-z'\Vert$.
By Corollary 4 by Maurer~\cite{maurer16}, the statement is proved.
\end{proof}

\begin{mythm}[Lemma~\ref{lem:rademacher_bound}]
Let $p\in\mathbb{R}^d$ satisfy $\Vert p\Vert=1$.
For $\mcl{F}_1$ defined in Section~\ref{sec:deep_rkhm},
we have
\begin{align*}
\hat{R}_n(\mathbf{x},\mcl{F}_1p)\le \frac{B_1}{n}\Big(\sum_{i=1}^n\opn{tr}(k(x_i,x_i))\Big)^{1/2}.
\end{align*}
\end{mythm}
\begin{proof}
We have
\begin{align}
&\mr{E}\bigg[\sup_{f\in \mcl{F}_1}\frac{1}{n}\sum_{i=1}^n\bracket{\sigma_i,f(x_i)p}\bigg]
=\mr{E}\bigg[\sup_{f\in \mcl{F}_1}\frac{1}{n}\sum_{i=1}^n\bracket{(\sigma_ip^*)p,\bracket{\phi_1(x_i),f}_{\modu_1}p}\bigg]\nn\\
&\qquad =\mr{E}\bigg[\sup_{f\in \mcl{F}_1}\frac{1}{n}\Bbracket{p,\;\sum_{i=1}^n(p\sigma_i^*)\bracket{\phi_1(x_i),f}_{\modu_1}p}\bigg]\nn\\
&\qquad =\mr{E}\bigg[\sup_{f\in \mcl{F}_1}\frac{1}{n}\Bbracket{p,\;\Bbracket{\sum_{i=1}^n\phi_1(x_i)(\sigma_ip^*),f}_{\tilde{\modu}_1}p}\bigg]\nn\\
&\qquad =\mr{E}\bigg[\sup_{f\in \mcl{F}_1}\frac{1}{n}\Bsemibracket{\sum_{i=1}^n\phi_1(x_i)(\sigma_ip^*),f}_{\tilde{\modu}_1,p}\bigg]\nn\\
&\qquad \le\mr{E}\bigg[\sup_{f\in \mcl{F}_1}\frac{1}{n}\bigg\Vert \sum_{i=1}^n\phi_1(x_i)(\sigma_ip^*)\bigg\Vert_{\tilde{\modu}_1,p}\;\Vert f\Vert_{\tilde{\modu}_1,p}\bigg]\label{eq:c-s}\\
&\qquad \le\mr{E}\bigg[\sup_{f\in \mcl{F}_1}\frac{1}{n}\Bbracket{p,\;\bigg\vert \sum_{i=1}^n\phi_1(x_i)(\sigma_ip^*)\bigg\vert^2_{\tilde{\modu}_1}p}^{1/2}\;\Vert f\Vert_{\tilde{\modu}_1}\bigg]\label{eq:f_Lnorm}\\
&\qquad \le\frac{B_1}{n}\mr{E}\bigg[\bigg(\sum_{i,j=1}^np^*p\sigma_i^*k_1(x_i,x_j)\sigma_jp^*p\bigg)^{1/2}\bigg]\nn\\
&\qquad \le\frac{B_1}{n}\mr{E}\bigg[\sum_{i,j=1}^n\sigma_i^*k_1(x_i,x_j)\sigma_j\bigg]^{1/2}\label{eq:jensen}\\
&\qquad =\frac{B_1}{n}\mr{E}\bigg[\sum_{i=1}^n\sigma_i^*k_1(x_i,x_i)\sigma_i\bigg]^{1/2}
=\frac{B_1}{n}\bigg(\sum_{i=1}^n\opn{tr}k_1(x_i,x_i)\bigg)^{1/2},\nn
\end{align}
where for $p\in\mathbb{C}^d$ and $f,g\in\tilde{\modu}_1$, $(f,g)_{\tilde{\modu}_1,p}$ is the semi-inner product defined by $(f,g)_{\tilde{\modu}_1,p}=\bracket{p,\bracket{f,g}_{\tilde{\modu_1}}p}$ and $\vert f\vert_{\tilde{\modu}_1}^2=\bracket{f,f}_{\tilde{\modu}_1}$.
In addition, the inequality~\eqref{eq:c-s} is by the Cauchy--Schwartz inequality and the inequality~\eqref{eq:jensen} is by the Jensen's inequality.
Note that the Cauchy--Schwartz inequality is still valid for semi-inner products.
The inequality~\eqref{eq:f_Lnorm} is derived by the inequality 
\begin{align*}
\Vert f_L\Vert_{\tilde{\modu}_L,p}^2=\bracket{p,\bracket{f_L,f_L}_{\modu_L}p}\le \Vert p\Vert^2\Vert \bracket{f_L,f_L}_{\modu_L}\Vert_{\alg}\le \Vert f_L\Vert_{\tilde{\modu}_L}^2.
\end{align*}
\end{proof}

\begin{mythm}[Theorem~\ref{thm:generalization_shallow}]
Assume there exists $D>0$ such that $\Vert k_1(x,x)\Vert_{\alg}\le D$ for any $x\in\alg_0$.
Let $\tilde{K}=4\sqrt{2}(\sqrt{D}B_1+E)B_1$ and $\tilde{M}=6(\sqrt{D}B_1+E)^2$.
Let $\delta\in (0,1)$.
Then, for any $g\in\mcl{G}(\mcl{F}_1)$, where $\mcl{F}_1$ is defined in Section~\ref{sec:deep_rkhm}, with probability at least $1-\delta$, we have
\begin{align*}
\Vert\mr{E}[\vert g(x,y)\vert^2_{\alg}]\Vert_{\alg}
\le \bigg\Vert \frac{1}{n}\sum_{i=1}^n\vert g(x_i,y_i)\vert^2_{\alg}\bigg\Vert_{\alg}+\frac{\tilde{K}}{n}\bigg(\sum_{i=1}^n\opn{tr}k_1(x_i,x_i)\bigg)^{1/2}+\tilde{M}\sqrt{\frac{\log(2/\delta)}{2n}}.
\end{align*}
\end{mythm}
\begin{proof}
For $f\in\modu_1$ and $x\in\alg_0$, we have
\begin{align*}
\Vert f(x)\Vert_{\alg}=\Vert \bracket{\phi_1(x),f}_{\modu_1}\Vert_{\alg}
\le \Vert k_1(x,x)\Vert_{\alg}^{1/2}\Vert f\Vert_{\modu_1}
\le \sqrt{D}B_1.
\end{align*}
Thus, we set $C$ as $\sqrt{D}B_1$ and apply Lemmas~\ref{lem:generalization_shallow}, \ref{lem:maurer}, and \ref{lem:rademacher_bound}.
Then, for $p\in\mathbb{R}^d$ satisfy $\Vert p\Vert=1$, with probability at least $1-\delta$, we have
\begin{align*}
\Vert\mr{E}[\vert g(x,y)\vert_{\alg}^2]^{1/2}p\Vert^2\le \bigg\Vert \frac{1}{n}\sum_{i=1}^n\vert g(x_i,y_i)\vert_{\alg}^2\bigg\Vert_{\alg}+\frac{\tilde{K}}{n}\bigg(\sum_{i=1}^n\opn{tr}k_1(x_i,x_i)\bigg)^{1/2}+\tilde{M}\sqrt{\frac{\log(2/\delta)}{2n}}.
\end{align*}
Therefore, we obtain
\begin{align*}
\Vert\mr{E}[\vert g(x,y)\vert_{\alg}^2]\Vert_{\alg}
&=\Vert\mr{E}[\vert g(x,y)\vert_{\alg}^2]^{1/2}\Vert^2_{\alg}\\
&\le \bigg\Vert \frac{1}{n}\sum_{i=1}^n\vert g(x_i,y_i)\vert_{\alg}^2\bigg\Vert_{\alg}+\frac{\tilde{K}}{n}\bigg(\sum_{i=1}^n\opn{tr}k_1(x_i,x_i)\bigg)^{1/2}+\tilde{M}\sqrt{\frac{\log(2/\delta)}{2n}},
\end{align*}
which completes the proof.
\end{proof}

\begin{mythm}[Proposition~\ref{prop:rademacher_deep}]
We have
\begin{align*}
\hat{R}_n(\mathbf{x},\mcl{F}^{\mr{deep}}_Lp)\le \frac{1}{n}\sup_{(f_j\in \fsp_j)_j}\Vert P_{f_{L-1}}\cdots P_{f_{1}}\vert_{\tilde{\mcl{V}}(\bx)}\Vert_{\opn{op}}\;\Vert f_L\Vert_{{\modu}_L}\;\Big(\sum_{i=1}^n\opn{tr}(k_1(x_i,x_i))\Big)^{1/2}.
\end{align*}
Here, $\tilde{\mcl{V}}(\bx)$ is the submodule of $\tilde{\modu_1}$ generated by $\phi_1(x_1),\ldots\phi_1(x_n)$.
\end{mythm}
The following lemma by Lance~\cite[Proposition 1.2]{lance95} is used in proving Proposition~\ref{prop:rademacher_deep}.
Here, for $a,b\in\alg$, $a\le b$ means $b-a$ is Hermitian positive definite.
\begin{lemma}\label{lem:lance}
Let $\modu$ and $\mcl{N}$ be Hilbert $\alg$-modules and let $A$ be an $\alg$-linear operator from $\modu$ to $\mcl{N}$.
Then, we have
$\vert Aw\vert_{\mcl{N}}^2\le \Vert A\Vert_{\opn{op}}^2\vert w\vert_{\modu}^2$.
\end{lemma}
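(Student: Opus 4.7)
The plan is to adapt the classical functional-calculus argument. By homogeneity it suffices to treat the case $\Vert A\Vert_{\opn{op}}=1$, after which we must show $\vert Aw\vert_{\mcl{N}}^2\le \vert w\vert_{\modu}^2$ as an inequality of positive elements of $\alg$. The driving idea is to manufacture an auxiliary vector $z\in\modu$ whose self-inner-product is dominated by the unit of (the unitization of) $\alg$, so that the cheap norm bound $\Vert Az\Vert_{\mcl{N}}\le 1$ gives an order bound on $\langle Az,Az\rangle_{\mcl{N}}$ in $\alg$; then $\alg$-linearity of $A$ lets us rescale this bound back into a statement about $w$.

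Concretely, set $a:=\vert w\vert_{\modu}^2=\langle w,w\rangle_{\modu}\in\alg_+$. For $\epsilon>0$, working in the unitization of $\alg$ if necessary (the paper's $\alg=\mathbb{C}^{d\times d}$ is already unital), define $b_\epsilon:=(a+\epsilon)^{-1/2}$ via the continuous functional calculus and set $z:=w\cdot b_\epsilon\in\modu$. Using the identity $\langle w b,w b\rangle_{\modu}=b^*\langle w,w\rangle_{\modu} b$ for the $\alg$-valued inner product,
\[
\langle z,z\rangle_{\modu}=(a+\epsilon)^{-1/2}\,a\,(a+\epsilon)^{-1/2}\le 1,
\]
since $a\le a+\epsilon$ and conjugation by the positive element $(a+\epsilon)^{-1/2}$ preserves the order. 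Thus $\Vert z\Vert_{\modu}^2=\Vert\langle z,z\rangle_{\modu}\Vert_{\alg}\le 1$, so $\Vert Az\Vert_{\mcl{N}}\le 1$ by the normalization of $A$; using that a positive element of a $C^*$-algebra of norm at most $1$ is dominated by the unit, this upgrades to $\langle Az,Az\rangle_{\mcl{N}}\le 1$.

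Finally, by $\alg$-linearity $Az=(Aw)b_\epsilon$, so the previous inequality reads $(a+\epsilon)^{-1/2}\,\vert Aw\vert_{\mcl{N}}^2\,(a+\epsilon)^{-1/2}\le 1$; conjugating by $(a+\epsilon)^{1/2}$ yields $\vert Aw\vert_{\mcl{N}}^2\le a+\epsilon=\vert w\vert_{\modu}^2+\epsilon$. Letting $\epsilon\downarrow 0$ and undoing the homogeneity rescaling recovers the desired $\vert Aw\vert_{\mcl{N}}^2\le\Vert A\Vert_{\opn{op}}^2\,\vert w\vert_{\modu}^2$. The one step warranting some care is the claim that $x\mapsto c^*xc$ preserves the order on $\alg_+$, which is immediate from writing any positive $y-x$ as $d^*d$; beyond this I do not foresee any real obstacle, as all functional calculus manipulations are standard.
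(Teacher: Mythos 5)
Your argument is correct: the normalization of $\Vert A\Vert_{\opn{op}}$, the auxiliary vector $z=w(a+\epsilon)^{-1/2}$ with $\langle z,z\rangle_{\modu}\le 1$, the upgrade from $\Vert Az\Vert_{\mcl{N}}\le 1$ to $\langle Az,Az\rangle_{\mcl{N}}\le 1$, and the conjugation by $(a+\epsilon)^{1/2}$ followed by $\epsilon\downarrow 0$ all go through, and the order-preservation of $x\mapsto c^*xc$ that you flag is indeed the only point needing justification. There is, however, nothing in the paper to compare against: the authors do not prove this lemma but simply quote it as Proposition 1.2 of Lance's book on Hilbert $C^*$-modules. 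Your proof is essentially the classical Paschke--Lance localization argument for that proposition, so it is a faithful self-contained substitute for the citation.
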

\begin{proof}
\begin{align}
&\mr{E}\bigg[\sup_{f\in \mcl{F}^{\mr{deep}}_L}\frac{1}{n}\sum_{i=1}^n\bracket{\sigma_i,f(x_i)p}\bigg]
=\mr{E}\bigg[\sup_{(f_j\in\fsp_j)_j}\frac{1}{n}\sum_{i=1}^n\bracket{(\sigma_ip^*)p,f_L(f_{L-1}(\cdots f_1(x_i)\cdots))p}\bigg]\nn\\
&\qquad=\mr{E}\bigg[\sup_{(f_j\in\fsp_j)_j}\frac{1}{n}\sum_{i=1}^n\bbracket{(\sigma_ip^*)p,\bracket{\phi_L(f_{L-1}(\cdots f_1(x_i)\cdots)),f_L}_{\modu_L}p}\bigg]\nn\\
&\qquad=\mr{E}\bigg[\sup_{(f_j\in\fsp_j)_j}\frac{1}{n}\sum_{i=1}^n\bbracket{p,(p\sigma_i^*)\bracket{\phi_L(f_{L-1}(\cdots f_1(x_i)\cdots)),f_L}_{\modu_L}p}\bigg]\nn\\
&\qquad=\mr{E}\bigg[\sup_{(f_j\in\fsp_j)_j}\frac{1}{n}\Bbracket{p,\Bbracket{\sum_{i=1}^n\phi_L(f_{L-1}(\cdots f_1(x_i)\cdots))(\sigma_ip^*),f_L}_{\tilde{\modu}_L}p}\bigg]\nn\\
&\qquad=\mr{E}\bigg[\sup_{(f_j\in\fsp_j)_j}\frac{1}{n}\Bsemibracket{\sum_{i=1}^n\phi_L(f_{L-1}(\cdots f_1(x_i)\cdots))(\sigma_ip^*),f_L}_{\tilde{\modu}_L,p}\bigg]\nn\\
&\qquad\le\mr{E}\bigg[\sup_{(f_j\in\fsp_j)_j}\frac{1}{n}\bigg\Vert\sum_{i=1}^n\phi_L(f_{L-1}(\cdots f_1(x_i)\cdots))(\sigma_ip^*)\bigg\Vert_{\tilde{\modu}_L,p}\Vert f_L\Vert_{\tilde{\modu}_L,p}\bigg]\label{eq:c_s_deep}\\
&\qquad\le\mr{E}\bigg[\sup_{(f_j\in\fsp_j)_j}\frac{1}{n}\bigg\Vert P_{f_{L-1}}\cdots P_{f_{1}}\sum_{i=1}^n\phi_1(x_i)(\sigma_ip^*)\bigg\Vert_{\tilde{\modu}_L,p}\Vert f_L\Vert_{\tilde{\modu}_L}\bigg]\nn\\
&\qquad=\mr{E}\bigg[\sup_{(f_j\in\fsp_j)_j}\frac{1}{n}\Bbracket{p,\;\bigg\vert P_{f_{L-1}}\cdots P_{f_{1}}\sum_{i=1}^n\phi_1(x_i)(\sigma_ip^*)\bigg\vert_{\tilde{\modu}_L}^2p}^{1/2}\Vert f_L\Vert_{\tilde{\modu}_L}\bigg]\nn\\
&\qquad\le \mr{E}\bigg[\sup_{(f_j\in\fsp_j)_j}\frac{1}{n}\Bbracket{p,\;\Vert P_{f_{L-1}}\cdots P_{f_{1}}\vert_{\tilde{\mcl{V}}(\bx)}\Vert_{\opn{op}}^2\; \bigg\vert \sum_{i=1}^n\phi_1(x_i)(\sigma_ip^*)\bigg\vert_{\tilde{\modu}_L}^2p}^{1/2}\Vert f_L\Vert_{\tilde{\modu}_L}\bigg]\label{eq:op_ineq_deep}\\
&\qquad\le \frac{1}{n}\sup_{(f_j\in\fsp_j)_j}\Vert P_{f_{L-1}}\cdots P_{f_{1}}\vert_{\tilde{\mcl{V}}(\bx)}\Vert_{\opn{op}}\;\Vert f_L\Vert_{{\modu}_L} \;\mr{E}\bigg[\Bbracket{p,\;\bigg\vert\sum_{i=1}^n\phi_1(x_i)(\sigma_ip^*)\bigg\vert_{\tilde{\modu}_L}^2p}^{1/2}\bigg]\nn\\
&\qquad\le \frac{1}{n}\sup_{(f_j\in\fsp_j)_j}\Vert P_{f_{L-1}}\cdots P_{f_{1}}\vert_{\tilde{\mcl{V}}(\bx)}\Vert_{\opn{op}}\;\Vert f_L\Vert_{{\modu}_L} \;\mr{E}\bigg[\bigg(\sum_{i,j=1}^np^*p\sigma_i^*k_1(x_i,x_j)\sigma_j p^*p\bigg)^{1/2}\bigg]\nn\\
&\qquad\le \frac{1}{n}\sup_{(f_j\in\fsp_j)_j}\Vert P_{f_1}\cdots P_{f_{L-1}}\vert_{\tilde{\mcl{V}}(\bx)}\Vert_{\opn{op}}\;\Vert f_L\Vert_{{\modu}_L} \;\bigg(\sum_{i=1}^n\opn{tr}(k_1(x_i,x_i))\bigg)^{1/2},\nn
\end{align}
where the inequality~\eqref{eq:c_s_deep} is by the Cauchy--Schwartz inequality and the inequality~\eqref{eq:op_ineq_deep} is by Lemma~\ref{lem:lance}.
\end{proof}

\begin{mythm}[Proposition~\ref{prop:representer}]
Let $h:\alg^n\times\alg^n\to\mathbb{R}_+$ be an error function, let $g_1$ be an $\mathbb{R}_+$-valued function on the space of bounded linear operators on $\tilde{\modu}_1$, and let $g_2:\mathbb{R}_+\to\mathbb{R}_+$ satisfy $g_2(a)\le g_2(b)$ for $a\le b$.
Assume the following minimization problem has a solution: 
\begin{align*}
\min_{(f_j\in\modu_j)_j}h(f_L\circ\cdots\circ f_1(x_1),\ldots,f_L\circ\cdots\circ f_1(x_n))+g_1(P_{f_{L-1}}\cdots P_{f_L}\vert_{\tilde{\mcl{V}}(\bx)})+g_2(\Vert f_L\Vert_{\modu_L}).
\end{align*}
Then, there exists a solution admitting a representation of the form $f_j=\sum_{i=1}^n\phi_j(x_i^{j-1})c_{i,j}$ for some $c_{1,j},\ldots,c_{n,j}\in\alg$ and for $j=1,\ldots,L$.
Here, $x_i^j=f_j\circ \cdots \circ f_1(x_i)$ for $j=1,\ldots,L$ and $x_i^0=x_i$.
\end{mythm}
\begin{proof}
Let $\mcl{V}_j(\bx)$ be the submodule of ${\modu}_j$ generated by $\phi_j(x_1^{j-1}),\ldots,\phi_j(x_n^{j-1})$.
Let $f_j=f_j^{\parallel}+f_j^{\perp}$, where $f_j^{\parallel}\in\mcl{V}_j(\bx)$ and $f_j^{\perp}\in\mcl{V}_j(\bx)^{\perp}$.
Then, for $j=1,\ldots,L-1$, we have
\begin{align}
f_j(x_i^{j-1})=\bbracket{\phi_j(x_i^{j-1}),f_j}_{\modu_j}
=\bbracket{\phi_j(x_i^{j-1}),f_j^{\parallel}}_{\modu_j}
=f_j^{\parallel}(x_i^{j-1}).\label{eq:repeoducing_perp}
\end{align}
In addition, we have
\begin{align*}
P_{f_{L-1}}\cdots P_{f_{1}}\vert_{\tilde{\mcl{V}}(\bx)}
=\prod_{j=1}^{L-1}P_{f_j}\vert_{\tilde{\mcl{V}}_j(\bx)},
\end{align*}
where $\tilde{\mcl{V}}_j(\bx)$ is the submodule of $\tilde{\modu}_j$ generated by $\phi_j(x_1^{j-1}),\ldots\phi_j(x_n^{j-1})$.
In the same manner as Eq.~\eqref{eq:repeoducing_perp}, we obtain
\begin{align*}
P_{f_j}\sum_{i=1}^n\phi_j(x_i^{j-1})c_{i,j}
&=\sum_{i=1}^n\phi_{j+1}(f_j(x_i^{j-1}))c_{i,j}
=\sum_{i=1}^n\phi_{j+1}\big(f_j^{\parallel}(x_i^{j-1})\big)c_{i,j}\\
&=\sum_{i=1}^nP_{f_j^{\parallel}}\phi_{j}(x_i^{j-1})c_{i,j}.
\end{align*}
Thus, we have $P_{f_j}\vert_{\tilde{\mcl{V}}_j(\bx)}=P_{f_j^{\parallel}}\vert_{\tilde{\mcl{V}}_j(\bx)}$.
Furthermore, we have
\begin{align*}
\big\Vert f_L\Vert_{\modu_L}=\Vert \bbracket{f_L^{\parallel}+f_L^{\perp},f_L^{\parallel}+f_L^{\perp}}_{\modu_L}\big\Vert_{\alg}
=\big\Vert\bbracket{f_L^{\parallel},f_L^{\parallel}}_{\modu_L}+\bbracket{f_L^{\perp},f_L^{\perp}}_{\modu_L}\big\Vert_{\alg}
\ge \big\Vert\bbracket{f_L^{\parallel},f_L^{\parallel}}_{\modu_L}\big\Vert_{\alg},
\end{align*}
where the last inequality is derived from the fact that $\bbracket{f_L^{\parallel},f_L^{\parallel}}_{\modu_L}+\bbracket{f_L^{\perp},f_L^{\perp}}_{\modu_L}-\bbracket{f_L^{\perp},f_L^{\perp}}_{\modu_L}$ is positive and Theorem 2.2.5 (3) by Murphy~\cite{murphy90}.
As a result, the statement is proved.
\end{proof}

\begin{mythm}[Proposition~\ref{prop:PF_compute}]
For $j=1,L$, let $[\phi_j(x_1^{j-1}),\ldots,\phi_j(x_n^{j-1})]R_j=Q_j$ be the QR decomposition of $[\phi_j(x_1^{j-1}),\ldots,\phi_j(x_n^{j-1})]$.
Then, we have $\Vert P_{f_{L-1}}\cdots P_{f_{1}}\vert_{\tilde{\mcl{V}}(\bx)}\Vert_{\opn{op}}=\Vert R_L^*G_LR_1\Vert_{\opn{op}}$.   
\end{mythm}
\begin{proof}
The result is derived by the identities
\begin{align*}
\Vert P_{f_{L-1}}\cdots P_{f_{1}}\vert_{\tilde{\mcl{V}}(\bx)}\Vert_{\opn{op}}
&=\Vert Q_L^* P_{f_{L-1}}\cdots P_{f_{1}}Q_1\Vert_{\opn{op}}
=\Vert Q_L^* P_{f_{L-1}}\cdots P_{f_{1}} [\phi_1(x_1),\ldots,\phi_1(x_n)]R_1\Vert_{\opn{op}}\nn\\
&= \Vert R_L^*[\phi_L(x_1^{L-1}),\ldots,\phi_L(x_n^{L-1})]^*[\phi_L(x_1^{L-1}),\ldots,\phi_L(x_n^{L-1})]R_1\Vert_{\opn{op}}\nn\\
&= \Vert R_L^*G_LR_1\Vert_{\opn{op}}.
\end{align*}    
\end{proof}

\begin{mythm}[Proposition~\ref{prop:PF_bound}]
Assume $G_L$ is invertible. Then, we have
\begin{align*}
\Vert  P_{f_{L-1}}\cdots P_{f_{1}}\vert_{\tilde{\mcl{V}}(\bx)}\Vert_{\opn{op}}\le \Vert G_L^{-1}\Vert^{1/2}_{\opn{op}}\Vert G_L\Vert_{\opn{op}}\Vert G_1^{-1}\Vert_{\opn{op}}^{1/2}.
\end{align*}
\end{mythm}
\begin{proof}
Since $G_L$ is invertible, by Proposition~\ref{prop:PF_compute}, $\Vert R_L^*G_LR_1\Vert_{\opn{op}}$ is bounded as
\begin{align*}
\Vert  P_{f_{L-1}}\cdots P_{f_{1}}\vert_{\tilde{\mcl{V}}(\bx)}\Vert_{\opn{op}}
=\Vert R_L^*G_LR_1\Vert_{\opn{op}}
\le \Vert R_L\Vert_{\opn{op}} \Vert G_L\Vert_{\opn{op}} \Vert R_1\Vert_{\opn{op}}
=\Vert G_L^{-1}\Vert^{1/2}_{\opn{op}}\Vert G_L\Vert_{\opn{op}}\Vert G_1^{-1}\Vert_{\opn{op}}^{1/2},
\end{align*}
where the last equality is derived by the identity $\Vert R_j\Vert^2_{\opn{op}}=\Vert R_jR_j^* \Vert_{\opn{op}}=\Vert {G_j}^{-1}\Vert_{\opn{op}}$.
\end{proof}

\section{Connection with Generalization Bound with Koopman Operators}\label{ap:koopman_bound}
Hashimoto et al.~\cite{hashimoto23} derived a generalization bound for the classical neural network composed of linear transformations and activation functions using Koopman operators.
The Koopman operator is the adjoint of the Perron--Frobenius operator.
In their analysis, they set the final transformation $f_L$ in an RKHS and observed that if the transformation of each layer is injective, the noise is separated through the linear transformations and cut off by $f_L$.
Since the final transformation $f_L$ in our case is in an RKHM, the same interpretation about noise is valid for deep RKHM, too.
Indeed, if $f_L\circ\cdots \circ f_1$ is not injective, then $G_L$ is not invertible, which results in the right-hand side of the inequality~\eqref{eq:pf_norm} going to infinity.
The bound derived by Hashimoto et al. also goes to infinity if the network is not injective.

\section{Experimental Details and Additional Results}
We provide details for the experiments in Section~\ref{sec:numexp}.
All the experiments are executed with Python 3.9 and TensorFlow 2.6 on Intel(R) Core(TM) i9 CPU and NVIDIA Quadro RTX 5000 GPU with CUDA 11.7.

\subsection{Comparison with vvRKHS}\label{ap: comparison_vvrkhs}
We set $k_j(x,y)=\tilde{k}(x,y)I$ for $j=1,\ldots,3$ with $\tilde{k}(x,y)=\mr{e}^{-c\sum_{i,j=1}^{d}\vert x_{i,j}-y_{i,j}\vert}$ and $c=0.001$ for positive definite kernels.
For the optimizer, we used SGD.
The learning rate is set as $1e^{-4}$ both for the deep RKHM and deep vvRKHS.
The initial value of $c_{i,j}$ is set as $a_{i,j}+\epsilon_{i,j}$, where $a_{i,j}\in\alg_j$ is the block matrix all of whose elements are $0.1$ and $\epsilon_{i,j}$ is randomly drawn from $\mcl{N}(0,0.05)$.
The result in Figure~\ref{fig:vvrkhs} is obtained by 5 independent runs.

\subsection{Observation about benign overfitting}
We set the same positive definite kernel as Subsection~\ref{ap: comparison_vvrkhs} for $j=1,2$.
For the optimizer, we used SGD.
The learning rate is set as $3\times 1e^{-4}$.
The initial value of $c_{i,j}$ is the same as Subsection~\ref{ap: comparison_vvrkhs}.
The result in Figure~\ref{fig:vvrkhs} is obtained by 3 independent runs.

\subsection{Comparison to CNN}\label{ap:numexp_cnn}
We set $k_j(x,y)=\tilde{k}(xa_j,ya_j)xy^*$ for $j=1,2$, where $a_1$ and $a_2$ are block matrices whose block sizes are $2$ and $4$, for positive definite kernels.
All the nonzero elements of $a_j$ are set as $1$.
We set $a_1$ and $a_2$ to induce interactions in the block elements (see Remark~\ref{rem:product_structure}).
For the optimizer, we used Adam with learning rate $1e^{-3}$ for both the deep RKHM and the CNN.
The initial value of $c_{i,j}$ is set as $\epsilon_{i,j}$, where $\epsilon_{i,j}$ is randomly drawn from $\mcl{N}(0,0.1)$.

We combined the deep RKHM and CNN with $2$ dense layers.
Their activation functions are sigmoid and softmax, respectively.
For the CNN layers, we also used the sigmoid activation functions.
The loss function is set as the categorical cross-entropy for the CNN.
The result in Figure~\ref{fig:vvrkhs} is obtained by 5 independent runs.

\subsubsection{Memory consumption and computational cost}
\paragraph{Memory consumption}
We used a CNN with $(7\times 7)$ and $(4\times 4)$-filters. 
On the other hand, for the deep RKHM, we learned the coefficients $c_{i,j}$ in Proposition~\ref{prop:representer} for $j=1,2$ and $i=1,\ldots,n$. 
That is, we learned the following coefficients:
\begin{itemize}
\item $n(=20)$ block diagonal matrices each of whom has four $(7\times 7)$-blocks (for the first layer)
\item $n$ block diagonal matrices each of whom has seven $(4\times 4)$-blocks (for the second layer)
\end{itemize}
Thus, the size of the parameters we have to learn for the deep RKHM is larger than the CNN. Since the memory consumption depends on the size of learning parameters, the memory consumption is larger for the deep RKHM than for the CNN.

\paragraph{Computational cost}
In each iteration, we compute $f(x_i)$ for $i=1,\ldots,n$ and the derivative of $f$ with respect to the learning parameters. Here, $f=f_1\circ f_2$ is the network. For the deep RKHM, we compute the product of a Gram matrix (composed of $n\times n$ block diagonal matrices) and a vector (composed of $n$ block diagonal matrices) for computing $f_j(x_i)$ for all $i=1,\ldots,n$. Thus, the computational cost for computing $f(x_i)$ for all $i=1,\ldots,n$ is $O(n^2d(m_1+m_2))$, where $m_1=7$ and $m_2=4$ are the sizes of the block diagonal matrices. For the CNN, the computational cost for computing $f(x_i)$ for all $i=1,\ldots,n$ is $O(nd^2(l_1+l_2))$, where $l_1=7\times 7$ and $l_2=4\times 4$ are the number of elements in the filters. Since we set $n=20$ and $d=28$, the computational cost of the deep RKHM for computing $f(x_i)$ for $i=1,\ldots,n$ is smaller than that of the CNN. However, since the size of the learning parameters of the deep RKHM is large, the computational cost for computing the derivative of $f(x_i)$ is larger than that of the CNN.

\paragraph{Additional results}
To compare the deep RKHM to a CNN with the same size of learning parameters (the same memory consumption), we conducted the same experiment as the last experiment in the main text, excepting for the structure of the CNN. We constructed a CNN with $(28\times 7\cdot 20)$ and $(28\times 4\cdot 20)$-filters (The size of learning parameters is the same as the deep RKHM) and replaced the CNN used in the experiment in the main text. 
Figure~\ref{fig:additional} shows the result.
The result is similar to that in Figure~\ref{fig:vvrkhs} (c), and the deep RKHM also outperforms the CNN with the same learning parameter size. Since the size of the learning parameter is the same in this case, the computational cost for one iteration for learning the deep RKHM is the same or smaller than that for learning the CNN.

\begin{figure}
    \centering
    \includegraphics[scale=0.32]{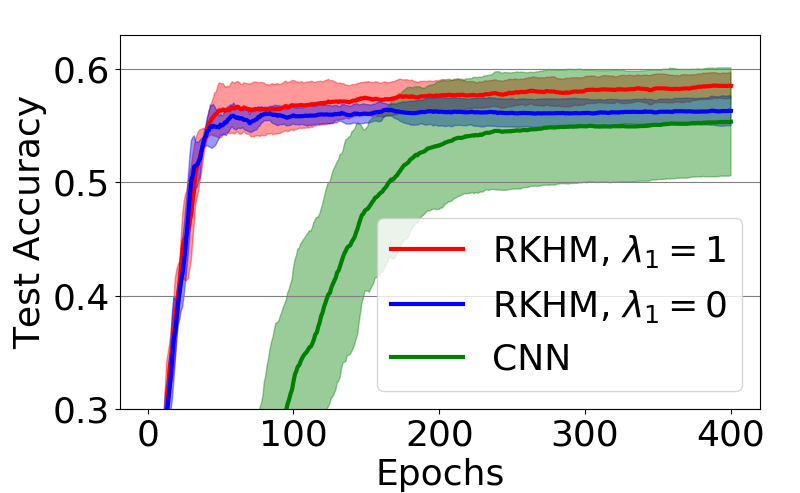}
    \caption{Comparison of deep RKHM to a CNN with the same size of learning parameter.}
    \label{fig:additional}
\end{figure}

\subsubsection{Additional results for benign ovefitting}\label{ap:benign_mnist}
We also observed benign overfitting for the deep RKHM. 
We compared the train and test losses for the deep RKHM with $\lambda_1=1$ to those with $\lambda_1=0$.
The results are illustrated in Figure~\ref{fig:benign_mnist}.
If we set $\lambda_1=0$ in Eq.~\eqref{eq:minimization_prob}, i.e., do not minimize the term in Eq.~\eqref{eq:pf_norm}, then whereas the training loss becomes small as the learning process proceeds, the test loss becomes large after sufficient iterations. 
On the other hand, if we set $\lambda_1=1$, i.e., minimize the term in Eq.~\eqref{eq:pf_norm}, then the training loss becomes smaller than the case of $\lambda_1=0$, and the test loss does not become large even the learning process proceeds.
The result implies that the term regarding the Perron--Frobenius operators in Eq.~\eqref{eq:minimization_prob} causes overfitting, but it is benign overfitting.

\begin{figure}
    \centering
    \includegraphics[scale=0.32]{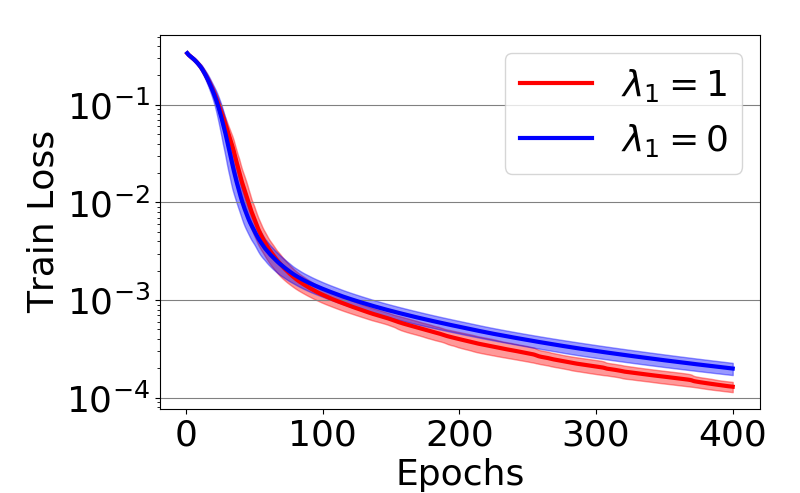}
    \includegraphics[scale=0.32]{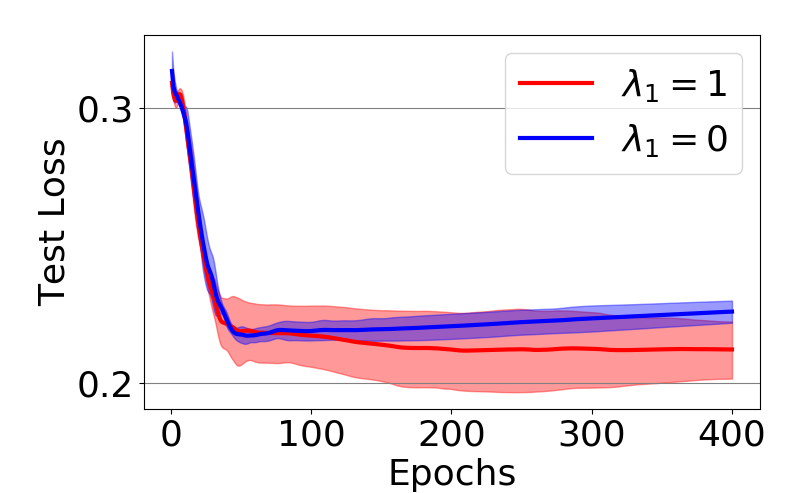}
    \caption{Train loss and test loss for the MNIST classification task with ($\lambda=1$) and without ($\lambda=0$) the minimization of the second term in Eq.~\eqref{eq:minimization_prob} regarding the Perron--Frobenius operators.}
    \label{fig:benign_mnist}
\end{figure}

\end{document}